\documentclass{article}

\usepackage[margin=1in]{geometry}
\usepackage[numbers]{natbib}

\usepackage[utf8]{inputenc} 
\usepackage[T1]{fontenc}    
\usepackage[colorlinks,linkcolor=blue,filecolor=blue,citecolor=black,urlcolor=blue]{hyperref}       
\usepackage{url}    

\usepackage{booktabs}      
\usepackage{amsfonts}       
\usepackage{nicefrac}       
\usepackage{microtype}      
\usepackage[table]{xcolor}         
\usepackage{tocvsec2}
\usepackage{titletoc}

\usepackage{xspace}

\usepackage{amsmath}
\usepackage{amsthm}
\usepackage{amssymb}

\usepackage{multicol}
\usepackage{multirow}
\usepackage{makecell}
\usepackage{enumitem}
\usepackage{wrapfig}
\usepackage{dsfont}

\usepackage{graphicx,epstopdf}
\usepackage{subfigure} 

\usepackage{algorithm}
\usepackage{algpseudocode}

\usepackage{threeparttable, booktabs}
\usepackage{mathtools}
\mathtoolsset{showonlyrefs=false}
\usepackage{float}

\usepackage{hyperref}[pdfusetitle]
\hypersetup{colorlinks,urlcolor=blue,hypertexnames=false}
\usepackage{xcolor,pifont}
\usepackage{MnSymbol,wasysym}

\usepackage{tabularray}

\usepackage[most]{tcolorbox}

\newtcblisting{tikzexample}{
    sidebyside,
    center lower,
    bicolor,
    colbacklower=white,
    sharp corners,
    frame engine=empty
}

\usepackage{quiver}

\restylefloat{algorithm} 
\newtheorem{theorem}{Theorem}

\newtheorem{lemma}{Lemma}

\usepackage{diagbox}

\usepackage{bm,adjustbox,array,tabularx,flafter,placeins}

\newcommand{\eeqref}[1]{(\ref{#1})}
\providecommand{\ropeResult}[3]{\makecell[c]{#1\\[2pt]\(#2\)\\[2pt]\(#3\)}}

\newtcolorbox{limbox}{
  enhanced,
  breakable,
  colback=orange!8,
  frame hidden,
  boxrule=0pt,
  borderline west={1.2pt}{0pt}{orange!55!black},
  sharp corners,
  left=6pt,
  right=2pt,
  top=3pt,
  bottom=3pt,
  before skip=5pt,
  after skip=5pt
}

\newtcolorbox{thmboxgreen}{
  enhanced, breakable, colback=green!5,
  frame hidden, boxrule=0pt,
  borderline west={1.2pt}{0pt}{green!55!black},
  sharp corners, left=6pt, right=2pt, top=3pt, bottom=3pt,
  before skip=5pt, after skip=5pt
}

\newtcolorbox{thmboxred}{
  enhanced, breakable, colback=red!5,
  frame hidden, boxrule=0pt,
  borderline west={1.2pt}{0pt}{red!55!black},
  sharp corners, left=6pt, right=2pt, top=3pt, bottom=3pt,
  before skip=5pt, after skip=5pt
}

\title{Where Position Matters: Profiling-Guided Layer-Wise RoPE Allocation in Transformers}

\author{%
  \normalfont
  \begin{minipage}[t]{0.96\textwidth}
    \centering
    Zhiwei Liu$^{*}$,\quad Boao Kong$^{*,\dagger}$,\quad Bing Liu
  \end{minipage}%
}

\begin{document}

\setlength{\parindent}{0pt}
\setlength{\parskip}{0.5em}
\maketitle

% Author notes appear at the foot of the title page, not beneath the author list.
\begingroup
\renewcommand{\thefootnote}{\fnsymbol{footnote}}
\footnotetext[1]{Zhiwei Liu and Boao Kong contributed equally to this work.}
\footnotetext[2]{Boao Kong is the corresponding author. Email: \texttt{kongboao@stu.pku.edu.cn}.}
\endgroup

\begin{abstract}
Rotary Position Embedding (RoPE) typically uses uniform coverage across Transformer layers despite heterogeneous positional requirements. Phase-off interventions reveal non-uniform, model-specific rotary-phase dependence, while coverage interventions relate it to coverage-upgrade benefits. We propose profiling-guided layer-wise RoPE allocation, assigning high/low coverage under a fixed budget using early coverage-benefit profiles. The framework supports fixed-width query/key representations and physical rotary-dimension allocation in Multi-head Latent Attention (MLA). Normalized-depth mapping transfers profiles within model families, enabling small-model measurements to guide larger-model allocations before pretraining. A brief analysis relates selection and transfer to score calibration and inter-layer interactions. From-scratch experiments demonstrate validation-loss improvements over available budget-matched baselines, while analytical resource accounting identifies favorable quality--storage--compute trade-offs for physical MLA. These results connect positional demand to transferable, efficient architecture design.
\end{abstract}

\section{Introduction}
Transformers~\citep{vaswani2017attention,dai2019transformer} model interactions within a sequence through attention, with positional encodings~\citep{shaw2018self,press2021train} providing information about token order and distance. Rotary Position Embedding (RoPE)~\citep{su2024roformer} applies position-dependent rotations to query and key dimension pairs, allowing attention scores to encode relative positional relationships. This mechanism introduces two architectural choices: \textbf{how many dimensions receive rotary transformations at each layer, and how these dimensions are distributed across network depth.} Connecting these choices to the positional requirements of individual layers creates a design space for improving language modeling under resource constraints.

Recent studies demonstrate the flexibility of rotary configurations. A systematic investigation of partial RoPE~\citep{khan2026fractional} examines how rotary coverage affects training dynamics and convergence, showing that reduced coverage can achieve performance comparable to full rotation. RNoPE-SWA~\citep{yang2026rope} interleaves RoPE-based sliding-window attention with global attention using no positional embeddings, exploiting complementary layer functions to improve quality and efficiency. RFIS and RPD~\citep{shi2026modern} further characterize attention-head specialization through frequency interventions and changes in attention distributions, providing architectural guidance for hybrid attention. These advances highlight the flexibility of rotary design and motivate adapting coverage to the positional demands of individual layers. Turning this idea into a practical design procedure requires reliable demand measurements, explicit budgeted allocation rules, and cost-effective access to profiling information across model scales, bringing the following challenges into focus.

\textbf{Limitations of existing approaches.} The practical value of a positional profile depends on its connection to task performance, its ability to guide layer-wise allocation, and the cost of obtaining it at the target scale.

\textbf{L1. Limited calibration between positional dependence and task-level utility.} Frequency-level analyses~\citep{barbero2024round} and head-level functional profiling~\citep{shi2026modern} reveal how rotary modulation shapes attention behavior. Translating these observations into layer-wise allocation requires a quantitative connection to the language-modeling objective: how strongly does each layer depend on rotary phase, and how well does this dependence predict the benefit of increasing its coverage? Establishing this connection is essential for using positional profiles as allocation signals.

\textbf{L2. Coarse-grained allocation of rotary resources across layers.} Uniform partial RoPE~\citep{khan2026fractional} assigns a common coverage ratio across depth, while RNoPE-SWA~\citep{yang2026rope} uses a predefined interleaving structure. These allocation rules offer limited flexibility for matching coverage to model-specific layer requirements. Varying coverage-upgrade benefits call for explicit allocation and matched-budget layer-placement comparisons.

\textbf{L3. Repeated profiling costs across model scales.} Checkpoint-based positional analyses~\citep{barbero2024round,shi2026modern} require trained model states and additional evaluations. Applying this workflow separately at each target scale incurs repeated costs for checkpoint acquisition, interventions, and configuration selection. Reusing structural information from smaller models therefore becomes an important step toward determining larger-model allocations before training.

These challenges motivate a unified framework for rotary allocation with efficient reuse of profiling information. We therefore ask:
\begin{thmboxgreen}
\textit{How can positional demand guide \textbf{model-specific RoPE allocation} under a fixed budget, and how can \textbf{small-model profiles} support larger-scale architecture design?}
\end{thmboxgreen}

\textbf{Main contributions.} We introduce \textbf{profiling-guided layer-wise RoPE allocation}, connecting positional-demand measurement, budgeted architecture design, and cross-scale profile transfer. We first quantify layer-wise dependence through phase-off interventions and calibrate the resulting scores against coverage-upgrade benefits. We then assign high and low rotary coverage using early coverage-benefit scores. Finally, a normalized-depth mapping transfers small-model profiles into allocations specified before target-model training. Our contributions are threefold.

\textbf{C1. Task-level positional-demand measurement and calibration (addressing L1).} We use \emph{phase-off profiling}, which replaces a layer's rotary transformation with the identity while preserving its pre-rotation query/key representations and attention mask. Paired changes in validation loss quantify the layer's dependence on rotary phase. Across four pretrained models and 136 layers, we identify strongly non-uniform and model-specific demand profiles. Controlled coverage interventions on selected high-, medium-, and low-demand layers further reveal strong ranking agreement between phase-off scores and coverage-upgrade benefits, connecting positional dependence to actionable architectural changes.

\textbf{C2. Model-specific RoPE allocation under a fixed budget (addressing L2).} We assign higher coverage to layers with larger early coverage-benefit scores and lower coverage to the remaining layers, subject to a prescribed total rotary-pair budget. Within each allocation experiment, configurations share a master frequency bank and use nested frequency-pair subsets. We provide a fixed-carrier implementation that preserves complete query/key widths, together with a physical rotary-dimension implementation for Multi-head Latent Attention (MLA)~\citep{liu2024deepseekv2,liu2024deepseek}. The latter connects layer-wise allocation to projection parameters, attention computation, and positional-key storage. A brief motivating analysis explains how upgrade benefits, score calibration, and inter-layer interactions inform the selection rule.

\textbf{C3. Within-family cross-scale allocation transfer (addressing L3).} We map small-model allocation profiles to larger models along normalized depth and combine the transferred scores with the target budget to determine layer-wise coverage before training. This procedure reuses source-model measurements for architecture selection at larger scales. From-scratch experiments with transferred allocations examine their language-modeling behavior under different attention parameterizations, connecting source-side structural measurements to target-scale training outcomes.

\subsection{Related Work}
\label{sec:related_work}

\textbf{Positional encoding and rotary coverage.} YaRN~\citep{peng2024yarn} combines frequency-dependent interpolation with attention scaling to extend context windows, while LongRoPE~\citep{ding2024longrope} uses non-uniform positional interpolation and progressive extension to support longer inputs. GPT-NeoX-20B~\citep{black2022gptneox} and Pythia~\citep{biderman2023pythia} apply rotary transformations to a subset of query/key dimensions, treating coverage as an architectural hyperparameter. The systematic study of partial RoPE~\citep{khan2026fractional} further examines the effects of this fraction on convergence and training stability. Our work develops coverage into a layer-wise decision variable and studies its allocation under a prescribed aggregate budget.

\textbf{Positional dependence and functional specialization.} Barbero et al.~\citep{barbero2024round} analyze how rotary frequencies support positional attention patterns and discuss the potential role of low-frequency channels in carrying semantic information. RNoPE-SWA~\citep{yang2026rope} combines local positional modeling and global retrieval through a hybrid layer configuration. RFIS and RPD~\citep{shi2026modern} use frequency interventions to characterize retrieval and positional heads, motivating layer-specific allocations in hybrid attention architectures. Our approach uses language-modeling loss to connect layer-wise phase dependence with coverage-upgrade benefits and budgeted rotary allocation.

\textbf{Structure-aware efficient architectures.} Multi-query attention (MQA)~\citep{shazeer2019fast,chowdhery2023palm} reduces decoding memory traffic by sharing key/value heads, while grouped-query attention (GQA)~\citep{ainslie2023gqa,touvron2023llama,grattafiori2024llama} introduces grouped sharing to balance model quality and inference efficiency. MLA~\citep{liu2024deepseekv2,liu2024deepseek} compresses the key/value cache through a low-rank latent representation, with a decoupled rotary branch providing a direct interface for positional-dimension allocation. Low-rank modeling approaches~\citep{yuan2023asvd,kong2026cr,wang2025svd,wang2025svdv2} exploit activation distributions or cross-layer correlations to construct compact parameterizations and reduce parameter and computational costs. For structured pruning, SliceGPT~\citep{ashkboos2024slicegpt} combines computationally invariant rotations with dimension deletion to reduce network width, while ShortGPT~\citep{men2025shortgpt} identifies redundant layers through changes between layer inputs and outputs. Our work uses layer-wise positional demand to organize rotary resources and transfers profiles within model families to determine target architectures before training.

\section{Preliminaries and Problem Formulation}
\label{sec:preliminaries}

We introduce attention, rotary position embeddings, and MLA, followed by the layer-wise rotary allocation problem.

\textbf{Attention and RoPE.} For an \(L\)-layer Transformer, let \(X_\ell\in\mathbb{R}^{T\times d}\) contain \(T\) tokens of hidden width \(d\). Suppressing head indices, attention~\citep{vaswani2017attention} uses \(Q_\ell=X_\ell W_\ell^Q\), \(K_\ell=X_\ell W_\ell^K\), and \(V_\ell=X_\ell W_\ell^V\). RoPE~\citep{su2024roformer} sets \(\widetilde q_{\ell,t}=R_{\ell,t}q_{\ell,t}\) and \(\widetilde k_{\ell,t}=R_{\ell,t}k_{\ell,t}\), rotating selected pairs \(j\) by \(t\theta_j\) and applying identity elsewhere. Stacking transformed vectors gives
\begin{equation}
\operatorname{Attn}_\ell(X_\ell)=\operatorname{Softmax}\!\left(\frac{\widetilde Q_\ell\widetilde K_\ell^\top}{\sqrt{d_h}}+\mathcal A\right)V_\ell,
\label{eq:attention}
\end{equation}
where \(d_h\) is the per-head query/key width and \(\mathcal A\) is the causal mask. All dimensions participate in matching.

\textbf{Multi-head latent attention.} MLA~\citep{liu2024deepseekv2,liu2024deepseek} uses a shared latent \(C_\ell^{KV}=X_\ell W_\ell^{DKV}\in\mathbb{R}^{T\times d_c}\), with \(K_\ell^C=C_\ell^{KV}W_\ell^{UK}\) and \(V_\ell=C_\ell^{KV}W_\ell^{UV}\). Content and rotary branches form
\begin{equation}
\begin{aligned}
\widetilde Q_\ell&=\bigl[Q_\ell^C,\operatorname{RoPE}(Q_\ell^R)\bigr],\quad \widetilde K_\ell=\bigl[K_\ell^C,\operatorname{RoPE}(K_\ell^R)\bigr].
\end{aligned}
\label{eq:mla_query_key}
\end{equation}
Caching the KV latent and cross-head shared positional key requires \(d_c+d_{\mathrm{rope}}^{(\ell)}\) logical elements per layer and token, where \(d_{\mathrm{rope}}^{(\ell)}\) is the per-head rotary width.

\textbf{Rotary coverage and layer-wise budget.} Let \(M_\ell\) and \(m_\ell\) denote candidate and active rotary pairs per query/key head, respectively, with coverage shared across heads:
\begin{equation}
\begin{gathered}
\rho_\ell=\frac{m_\ell}{M_\ell},\quad
m_\ell\in\{0,\ldots,M_\ell\},\quad \sum_{\ell=1}^{L}m_\ell=B.
\end{gathered}
\label{eq:rotary_budget}
\end{equation}
Uniform coverage \(\rho_\ell=\rho_0\) gives \(B=\rho_0\sum_\ell M_\ell\). We seek an allocation \(\boldsymbol m=(m_1,\ldots,m_L)\) under budget \(B\), evaluated by validation loss. For architecture allocation, coverage levels use nested pair subsets from a shared frequency bank. Fixed-carrier allocation preserves complete query/key widths; physical MLA sets \(d_{\mathrm{rope}}^{(\ell)}=2m_\ell\), keeping content-branch and latent widths fixed.

\section{Layer-Wise Positional Demand and Coverage-Upgrade Benefits}
\label{sec:positional_demand}

Section~\ref{sec:preliminaries} expresses the RoPE configuration as a layer-wise allocation vector \(\boldsymbol m\) satisfying Equation~\eeqref{eq:rotary_budget}. To determine rotary coverage at individual layers, we perform layer-wise phase interventions on pretrained models, measure the resulting changes in task loss, and examine their relationship with coverage-upgrade benefits.

\subsection{Experimental Setup and Positional-Demand Measurement}
\label{sec:phaseoff_measurement}

\textbf{Experimental setup.} We conduct layer-wise evaluations on DeepSeek-MoE-16B~\citep{dai2024deepseekmoe}, LLaMA-2-7B~\citep{touvron2023llama}, LLaMA-3.2-3B~\citep{meta2024llama32}, and Qwen3-30B~\citep{yang2025qwen3}. All models are evaluated on the same 500 text samples, with model weights frozen throughout measurement. Layer numbers in this analysis are one-based. Within each model, the reference and intervention configurations use identical input token sequences, attention masks, and loss-evaluation positions. Paired loss differences quantify the effect of changing the rotary configuration.

\textbf{Phase-off score.} Following the fixed-carrier parameterization in Section~\ref{sec:preliminaries}, we use the configuration with all candidate rotary pairs activated as the reference. Let \(\mathcal L_\ell(x;\rho)\) denote the average per-token validation loss on evaluation text sample \(x\) when layer \(\ell\) uses coverage \(\rho\), with all other layers retaining their reference configurations. Phase-off corresponds to \(\rho=0\), setting \(R_{\ell,t}=I\) while preserving the pre-rotation query/key representations and the value branch. Given an evaluation sample set \(\mathcal D_{\mathrm{prof}}\), we define the layer-wise positional-demand score as
\begin{equation}
\widehat D_\ell
=
\frac{1}{|\mathcal D_{\mathrm{prof}}|}
\sum_{x\in\mathcal D_{\mathrm{prof}}}
\left[
\mathcal L_\ell(x;0)
-
\mathcal L_\ell(x;1)
\right].
\label{eq:phaseoff_score}
\end{equation}
A larger \(\widehat D_\ell\) indicates a greater loss increase when the layer's rotary phase is disabled. Evaluating each layer yields the positional-demand profile \(\widehat{\boldsymbol D}=(\widehat D_1,\ldots,\widehat D_L)\), with confidence intervals computed from sample-level paired loss differences.

\subsection{Empirical Distribution of Layer-Wise Positional Demand}
\label{sec:demand_distribution}

\textbf{Variation across layers.} Figure~\ref{fig:layer_profiles} displays signed demand proportional to \(\widehat D_\ell/\max_j\widehat D_j\) at normalized depth \((\ell-\tfrac12)/L\). Each profile exhibits pronounced local peaks, with substantial differences between high-scoring layers and their neighbors. To quantify concentration, we measure the positive-demand share of a depth region as the sum of \([\widehat D_\ell]_+\) within that region divided by the corresponding sum over all layers, where \([a]_+=\max(a,0)\). Table~\ref{tab:demand_concentration} shows that positive demand in DeepSeek-MoE-16B, LLaMA-2-7B, and LLaMA-3.2-3B is concentrated in the first quarter of depth. In LLaMA-2-7B, layer 2 alone accounts for \(40.73\%\) of the model's total positive demand.

\textbf{Depth distributions across models.} Figure~\ref{fig:layer_profiles} also reveals differences between models. The principal peaks in the DeepSeek and LLaMA models occur in early layers, whereas Qwen3-30B exhibits elevated demand at both ends of the network. Accordingly, its final quarter of depth accounts for a substantially larger share of positive demand than in the other models. The locations of high-scoring layers and the concentration of demand together characterize each model's layer-wise positional structure.

\begin{figure}[t!]
\centering
\includegraphics[width=0.9\linewidth]{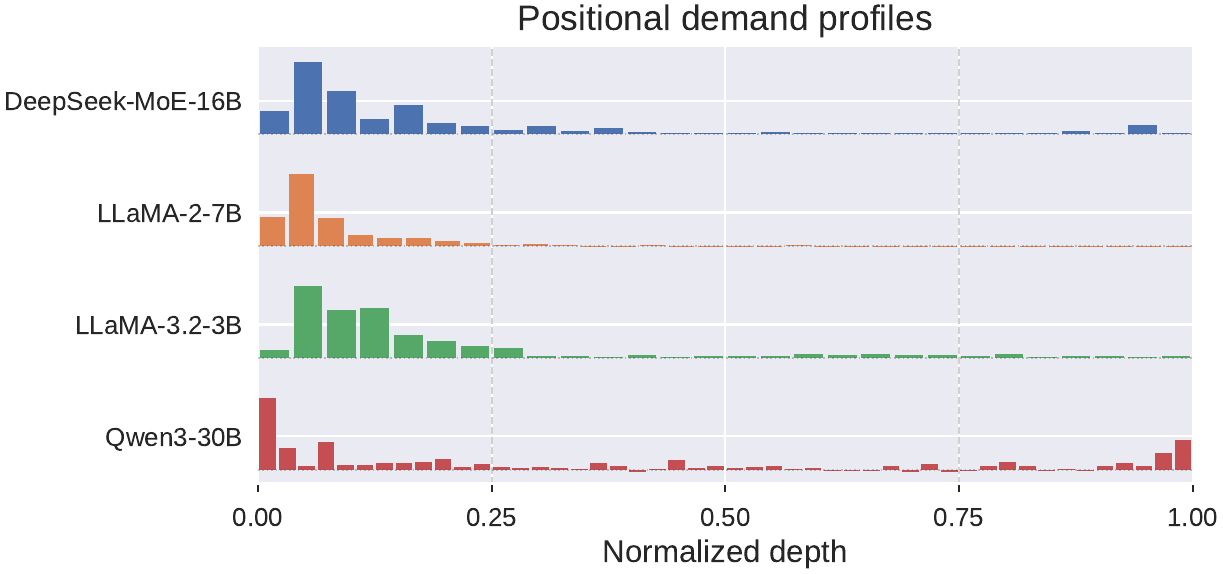}
\caption{Layer-wise phase-off demand across normalized depth. Signed scores are scaled by each model's positive peak for display.}
\label{fig:layer_profiles}
\end{figure}

\begin{table}[t!]
\centering
\small
\setlength{\tabcolsep}{3pt}
\renewcommand{\arraystretch}{1.14}
\caption{Positive phase-off demand shares in the first and last depth quarters of each model.}
\label{tab:demand_concentration}
\begin{threeparttable}
\begin{adjustbox}{max width=\columnwidth}
\begin{tabular}{@{}lcc@{}}
\toprule
Model
& First quarter
& Last quarter \\
\midrule
DeepSeek-MoE-16B & 79.57\% & 6.74\% \\
LLaMA-2-7B      & 93.79\% & 0.75\% \\
LLaMA-3.2-3B    & 79.16\% & 5.74\% \\
Qwen3-30B        & 55.42\% & 24.57\% \\
\bottomrule
\end{tabular}
\end{adjustbox}
\end{threeparttable}
\end{table}

\subsection{Relating Positional Demand to Coverage-Upgrade Benefits}
\label{sec:coverage_calibration}

\textbf{Coverage intervention setup.} To examine how positional-demand scores inform coverage allocation, we select two layers from each model's high-, medium-, and low-demand groups and measure task loss at different coverage levels on 100 calibration samples. Configurations retain the pretrained frequency bank and select rotary pairs with a fixed stratified policy. Unrotated dimensions receive the identity transformation and continue to participate in query/key matching. All coverage configurations within calibration use the same evaluation text samples, with other layers retaining their reference configurations. The profiling and calibration sample sets are denoted separately by \(\mathcal D_{\mathrm{prof}}\) and \(\mathcal D_{\mathrm{cal}}\). We evaluate \(\rho\in\{0,0.25,0.5,0.75,1\}\) and consider three upgrades: \(25\%\!\rightarrow75\%\), \(50\%\!\rightarrow100\%\), and \(25\%\!\rightarrow100\%\).

For low and high coverage levels \(\rho_L<\rho_H\), let \(\mathcal D_{\mathrm{cal}}\) denote the fixed calibration sample set. We define the coverage-upgrade benefit at layer \(\ell\) as
\begin{equation}
\begin{aligned}
&\widehat G_\ell(\rho_L,\rho_H)=\frac{1}{|\mathcal D_{\mathrm{cal}}|}
\sum_{x\in\mathcal D_{\mathrm{cal}}}
\left[\mathcal L_\ell(x;\rho_L)-\mathcal L_\ell(x;\rho_H)\right].
\end{aligned}
\label{eq:coverage_upgrade_gain}
\end{equation}
A larger \(\widehat G_\ell\) indicates a greater reduction in task loss from increasing the layer's rotary coverage.

\textbf{Correspondence between demand and benefit.} Figure~\ref{fig:coverage_calibration} relates phase-off scores to the benefits of upgrading coverage from \(50\%\) to \(100\%\). Within each model, high-demand layers typically exhibit larger upgrade benefits, while low-demand layers yield smaller benefits. For the \(50\%\!\rightarrow100\%\) upgrade, within-model Spearman correlations are \(0.943,1.000,0.943,1.000\) in the model order above; the other two intervals range from \(0.943\) to \(1.000\). These results connect the loss changes induced by phase-off interventions to the utility of increasing rotary coverage.

\textbf{Coverage responses across layers.} The benefits of additional coverage arise at different intervals even within one model. In LLaMA-2-7B, layer 1 recovers most positional utility by half coverage, whereas layer 2 improves mainly between \(50\%\) and \(75\%\) coverage. Appendix Figure~\ref{fig:coverage_responses} compares these responses under the same evaluation samples.

\begin{figure}[t!]
\centering
\includegraphics[width=0.9\linewidth]{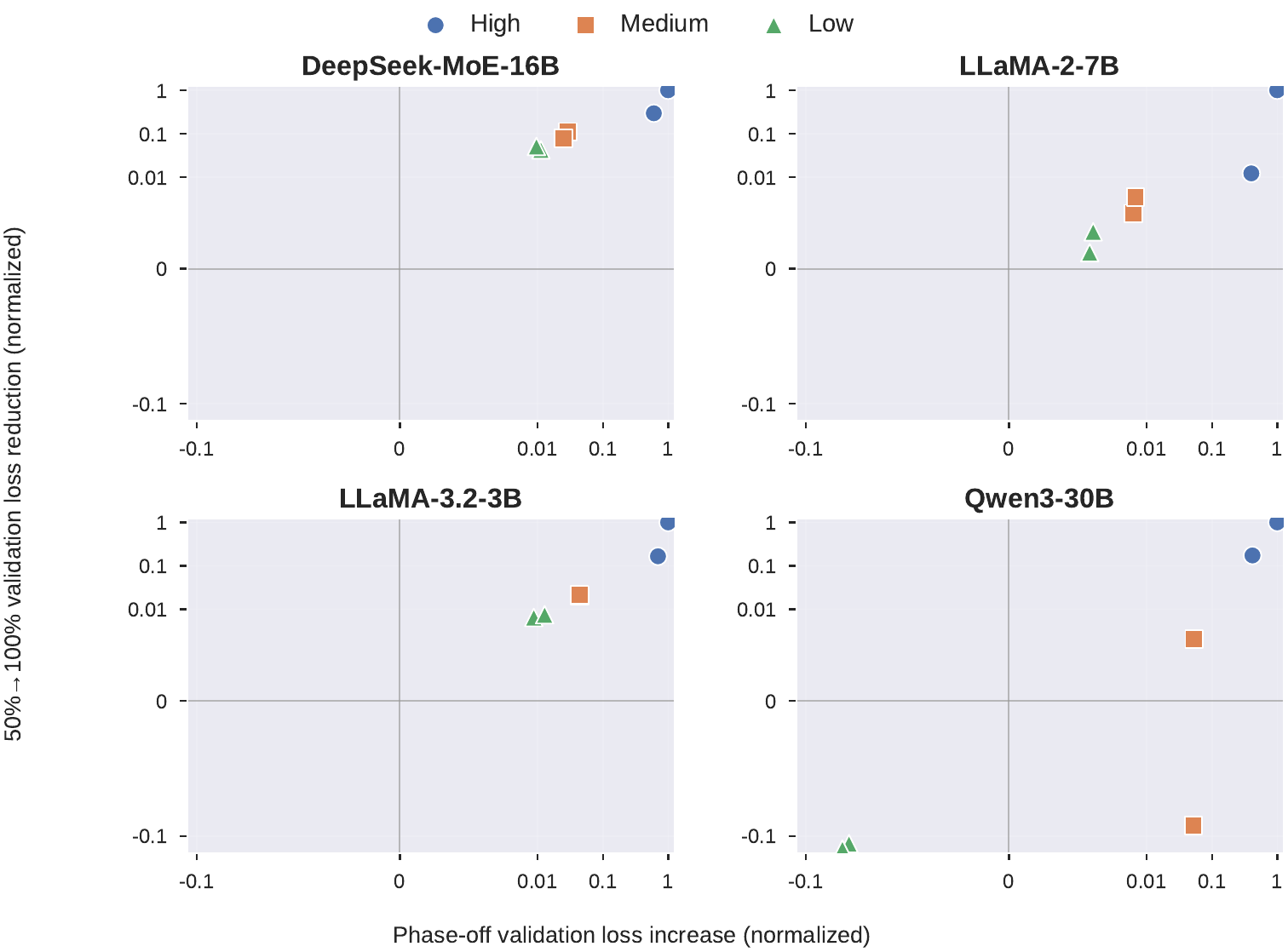}
\caption{Phase-off demand versus validation-loss gain from \(50\%\!\rightarrow100\%\) coverage. Six layers per model; signed axes use within-model peak normalization.}
\label{fig:coverage_calibration}
\end{figure}

Combining the layer-wise distributions with the coverage intervention results yields the following observation:
\begin{limbox}
\textbf{Key observation.} Positional demand in pretrained models is strongly non-uniform across depth, with model-specific high-demand regions. Across the evaluated calibration layers and coverage upgrades, phase-off scores exhibit strong ranking agreement with the measured upgrade benefits, providing a task-level basis for selecting high-coverage layers.
\end{limbox}

This observation motivates demand-guided budget allocation: under Equation~\eeqref{eq:rotary_budget}, the profile guides the selection of high-coverage layers, and high/low configurations organize rotary resources across depth. Section~\ref{sec:motivating_analysis} analyzes this design principle, and Section~\ref{sec:rope_allocation} develops the allocation method and its cross-scale implementation.

\section{Motivating Analysis}
\label{sec:motivating_analysis}

We analyze budgeted coverage allocation at a frozen checkpoint to motivate the layer-selection and profile-transfer rules. Consider high/low configurations with a common per-layer rotary-pair upgrade cost, so that Equation~\eeqref{eq:rotary_budget} permits \(K\) high-coverage layers. Let \(F(S)\) be the loss of high-coverage set \(S\), using identity rotation for unselected Dense pairs and zeroing unselected MLA positional coordinates, as specified in Appendix~\ref{app:allocation_setup}. Define \(G_\ell=F([L]\setminus\{\ell\})-F([L])\). Let \(s_\ell=[a_\ell]_+\), or a common positive normalization of these priorities, where \(a_\ell\) is the recorded matched-marginal allocation score obtained from Equation~\eeqref{eq:matched_marginal_score}. Denote its top-\(K\) set by \(\widehat S\) and a loss-minimizing \(K\)-layer set by \(S^\star\).

\begin{thmboxred}
\begin{theorem}[Informal]
\textbf{Calibrated profiles guide budgeted allocation.}
\label{thm:calibrated_allocation_main}
Suppose \(|G_\ell-\alpha s_\ell|\leq\varepsilon\) for every layer, with \(\alpha>0\), and joint losses satisfy \(F(S)=C-\sum_{\ell\in S}G_\ell+I(S)\), where \(C\) is independent of \(S\) and \(|I(S)|\leq\delta\) for every feasible set. Then
\begin{equation}
0\leq F(\widehat S)-F(S^\star)\leq 2K\varepsilon+2\delta.
\label{eq:allocation_regret_main}
\end{equation}
Exact calibration and additive effects give an optimal high/low allocation. The formal statement and proof appear in Appendix~\ref{app:calibrated_allocation}.
\end{theorem}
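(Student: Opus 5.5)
The plan is a standard top-$K$ regret decomposition. All sets involved are feasible, i.e.\ $|S|=K$. Using the additive model $F(S)=C-\sum_{\ell\in S}G_\ell+I(S)$, compare $\widehat S$ with $S^\star$. The constant $C$ cancels, leaving
\begin{equation}
F(\widehat S)-F(S^\star)=\Bigl(\sum_{\ell\in S^\star}G_\ell-\sum_{\ell\in\widehat S}G_\ell\Bigr)+\bigl(I(\widehat S)-I(S^\star)\bigr).
\end{equation}
The lower bound $0\le F(\widehat S)-F(S^\star)$ is immediate, since $S^\star$ minimizes $F$ over feasible $K$-sets and $\widehat S$ is one such set. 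For the interaction term, the hypothesis $|I(S)|\le\delta$ on every feasible set gives $I(\widehat S)-I(S^\star)\le 2\delta$.

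For the additive term, I would replace each $G_\ell$ by $\alpha s_\ell$ up to error $\varepsilon$:
\begin{equation}
\sum_{\ell\in S^\star}G_\ell-\sum_{\ell\in\widehat S}G_\ell\le \alpha\Bigl(\sum_{\ell\in S^\star}s_\ell-\sum_{\ell\in\widehat S}s_\ell\Bigr)+2K\varepsilon.
\end{equation}
Since $\widehat S$ is a top-$K$ set for $s$ and $\alpha>0$, the bracket is nonpositive. This gives the bound $2K\varepsilon+2\delta$. A slightly sharper count uses only the symmetric difference $S^\star\triangle\widehat S$, where the errors enter $2|S^\star\setminus\widehat S|\le 2K$ times, but the stated bound does not need it. Scaling $s$ by a common positive normalization only rescales $\alpha$, so the "common positive normalization" clause is absorbed into $\alpha$. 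Ties in the top-$K$ selection are harmless, because any tie-breaking still maximizes $\sum_{\ell\in S}s_\ell$.

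For the exact case $\varepsilon=\delta=0$, both error terms vanish. Then $F(\widehat S)=F(S^\star)$, so $\widehat S$ is itself optimal, which is the final sentence of the statement.

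The main obstacle is not the inequality itself but making the formal statement consistent with the definitions. First, $G_\ell=F([L]\setminus\{\ell\})-F([L])$ is a leave-one-out quantity. In the additive model it equals $G_\ell+I([L]\setminus\{\ell\})-I([L])$, so this identification is only consistent when those interaction values are controlled. I would handle this by treating $G_\ell$ in the decomposition as the model's additive coefficient. The hypothesis bounds $I$ only on feasible $K$-sets, which is exactly what the argument uses, and I would state in the formal version that the calibration hypothesis refers to these coefficients. Second, I would check that $s_\ell=[a_\ell]_+$ does not break the top-$K$ property. Clipping is monotone, so a top-$K$ set of $a$ remains a top-$K$ set of $s$. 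Hence the argument goes through with $\widehat S$ defined from $s$ directly.
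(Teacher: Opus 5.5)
Your proposal is correct and follows the paper's proof essentially step for step: cancel $C$ via the decomposition, use the top-$K$ property of $\widehat S$ to make the score bracket nonpositive, absorb the calibration errors, and bound the interaction difference by $2\delta$. The paper uses the sharper symmetric-difference count you mention, giving $2r\varepsilon$ with $r=|S^\star\setminus\widehat S|\le K$. Your consistency concern about $G_\ell$ does not arise in the paper, which defines $I(S)=F(S)-F_H-\sum_{\ell\notin S}G_\ell$ and $C=F_H+\sum_\ell G_\ell$; the decomposition is then an exact identity with $I([L])=I([L]\setminus\{\ell\})=0$, and in any case the argument never uses the leave-one-out definition of $G_\ell$.
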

\end{thmboxred}

Theorem~\ref{thm:calibrated_allocation_main} relates the quality of score-based layer selection to calibration accuracy and joint-configuration effects. The calibration condition connects the measured allocation priorities to the all-high reference gains.

\begin{thmboxred}
\begin{theorem}[Informal]
\textbf{Profile accuracy controls cross-scale allocation.}
\label{thm:profile_transfer_main}
Suppose a normalized target profile \(\boldsymbol s^{(t)}\) satisfies the preceding calibration and interaction conditions with parameters \(\alpha_t,\varepsilon_t,\delta_t\). If a transferred profile \(\widetilde{\boldsymbol s}\) satisfies \(\|\widetilde{\boldsymbol s}-\boldsymbol s^{(t)}\|_\infty\leq\eta\), its top-\(K_t\) set \(\widehat S_{s\to t}\) obeys
\begin{equation}
F_t(\widehat S_{s\to t})-F_t(S_t^\star)
\leq 2K_t(\varepsilon_t+\alpha_t\eta)+2\delta_t.
\label{eq:transfer_regret_main}
\end{equation}
The formal statement and proof appear in Appendix~\ref{app:profile_transfer}.
\end{theorem}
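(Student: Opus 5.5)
The plan is to reduce Theorem~\ref{thm:profile_transfer_main} to Theorem~\ref{thm:calibrated_allocation_main}. The transferred profile \(\widetilde{\boldsymbol s}\) will be treated as a calibrated profile for the target model with a larger calibration error. For every target layer \(\ell\), the triangle inequality gives
\begin{equation}
|G^{(t)}_\ell-\alpha_t\widetilde s_\ell|\leq|G^{(t)}_\ell-\alpha_t s^{(t)}_\ell|+\alpha_t|s^{(t)}_\ell-\widetilde s_\ell|\leq\varepsilon_t+\alpha_t\eta .
\end{equation}
So \(\widetilde{\boldsymbol s}\) satisfies the calibration hypothesis with the same \(\alpha_t>0\) and error \(\varepsilon_t'=\varepsilon_t+\alpha_t\eta\). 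The interaction hypothesis, \(F_t(S)=C_t-\sum_{\ell\in S}G^{(t)}_\ell+I_t(S)\) with \(|I_t(S)|\leq\delta_t\), concerns only the target losses and is unchanged. The top-\(K_t\) set of \(\widetilde{\boldsymbol s}\) is by definition \(\widehat S_{s\to t}\). Applying Theorem~\ref{thm:calibrated_allocation_main} with \((\alpha_t,\varepsilon_t',\delta_t,K_t)\) then yields exactly the bound \eqref{eq:transfer_regret_main}.

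For completeness, and in case the formal version of Theorem~\ref{thm:calibrated_allocation_main} is stated for a positive-part priority \(s_\ell=[a_\ell]_+\) rather than an arbitrary real vector, I would also spell out the underlying regret argument directly.

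\textbf{Step 1: compare the two sets using \(G\).} Write \(\widehat S=\widehat S_{s\to t}\) and \(S^\star=S^\star_t\). The interaction decomposition gives
\begin{equation}
F_t(\widehat S)-F_t(S^\star)=\sum_{\ell\in S^\star}G^{(t)}_\ell-\sum_{\ell\in\widehat S}G^{(t)}_\ell+I_t(\widehat S)-I_t(S^\star),
\end{equation}
and the interaction terms contribute at most \(2\delta_t\).

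\textbf{Step 2: swap \(G\) for the transferred scores.} Replace each \(G^{(t)}_\ell\) by \(\alpha_t\widetilde s_\ell\). The two sums contain \(2K_t\) terms in total, each perturbed by at most \(\varepsilon_t'\). This gives
\begin{equation}
\sum_{S^\star}G^{(t)}_\ell-\sum_{\widehat S}G^{(t)}_\ell\leq\alpha_t\Bigl(\sum_{S^\star}\widetilde s_\ell-\sum_{\widehat S}\widetilde s_\ell\Bigr)+2K_t\varepsilon_t'.
\end{equation}

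\textbf{Step 3: use the top-\(K_t\) property.} Since \(\widehat S\) is a top-\(K_t\) set of \(\widetilde{\boldsymbol s}\) and \(|S^\star|=K_t\), the bracketed difference is at most \(0\), and \(\alpha_t>0\) preserves the sign. Combining Steps 1–3 gives the bound.

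A sharper bound is available by noting that the terms indexed by \(\widehat S\cap S^\star\) cancel exactly, but the stated bound does not need this.

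\textbf{Main obstacle: matching the hypotheses, not the algebra.} I expect the only delicate point to be checking that the hypotheses match the objects being compared:
\begin{itemize}
\item the transferred profile must live on the same normalized scale as \(\boldsymbol s^{(t)}\), with the same \(\alpha_t\), so that the \(\ell_\infty\) closeness \(\eta\) is meaningful;
\item \(\widehat S_{s\to t}\) and \(S^\star_t\) must both be feasible \(K_t\)-sets, so that \(|I_t|\leq\delta_t\) applies to both;
\item ties in the top-\(K_t\) selection must be broken arbitrarily, which Step 3 tolerates.
\end{itemize}
Finally, nonnegativity of the regret is immediate from the optimality of \(S^\star_t\).
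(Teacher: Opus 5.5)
Your proposal is correct and follows the paper's proof. Like you, the paper uses the triangle inequality to get $|G_j^{(t)}-\alpha_t\widetilde s_j|\leq\varepsilon_t+\alpha_t\eta$ and then applies the calibrated-allocation theorem, whose formal version already allows an arbitrary real score vector. Your direct fallback argument is also correct but not needed; the paper's formal version only adds the bound $\eta\leq\varepsilon_s+\eta_{\mathrm{depth}}$, via $\ell_\infty$-nonexpansiveness of the row-stochastic $W$, which the informal statement takes as given.
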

\end{thmboxred}

Theorem~\ref{thm:profile_transfer_main} quantifies how profile discrepancy propagates to target-layer selection.

\textbf{Connection to the empirical observations.} Theorem~\ref{thm:calibrated_allocation_main} formalizes the score-based selection principle motivated by the layer-wise heterogeneity and demand--benefit ranking agreement observed in Section~\ref{sec:positional_demand}. Under its conditions, for any feasible set \(S\), replacing a selected layer \(j\in S\) with an unselected layer \(i\notin S\) lowers the configuration loss whenever \(\alpha(s_i-s_j)>2\varepsilon+2\delta\), as established in Lemma~\ref{lem:demand_guided_exchange}. Thus, sufficiently separated allocation scores identify beneficial budget-preserving exchanges, giving the observed high-demand regions a direct role in layer selection. Theorem~\ref{thm:profile_transfer_main} further connects normalized-depth profile agreement to the additional allocation-error term \(2K_t\alpha_t\eta\). Together, these conclusions motivate selecting high-coverage layers from model-specific allocation profiles and reusing small-model profiles for target-scale architecture design in Section~\ref{sec:rope_allocation}.

\section{Profiling-Guided Layer-Wise RoPE Allocation}
\label{sec:rope_allocation}

Section~\ref{sec:positional_demand} connects layer-wise positional demand to coverage-upgrade benefits, and Section~\ref{sec:motivating_analysis} motivates score-based selection and cross-scale transfer. We turn these insights into a layer-wise architecture: the rotary budget determines the number of high-coverage layers, the profile determines their locations, and a smaller model from the same family supplies the allocation before target training.

\subsection{Demand-Guided Budget Allocation}
\label{sec:budgeted_selection}

We train a full-rotation reference model for \(10\%\) of its total training steps and obtain layer-wise allocation scores. Denote the measured score by \(a_\ell\) and use \(s_\ell=[a_\ell]_+\) as the upgrade priority. The phase interventions and coverage calibration in Section~\ref{sec:positional_demand} connect positional demand to configuration utility; the coverage-benefit measurements and score records used in the experiments are described in Appendix~\ref{app:exp_setup}.

Following the high/low configuration in Section~\ref{sec:motivating_analysis}, let \(m_{\ell,H}=M_\ell\) and \(m_{\ell,L}=M_\ell-\Delta m\geq0\), where \(\Delta m>0\) is a common integer upgrade increment. Given total budget \(B\), the high-coverage count and allocation are
\begin{equation}
\begin{aligned}
K&=\frac{B-\sum_{\ell=1}^{L}m_{\ell,L}}{\Delta m},\quad\widehat S=\operatorname{TopK}_{\ell\in[L]}(s_\ell),\quad m_\ell=m_{\ell,L}+\Delta m\,\mathbf 1\{\ell\in\widehat S\},\quad\rho_\ell=\frac{m_\ell}{M_\ell}.
\end{aligned}
\label{eq:budgeted_allocation}
\end{equation}
The coverage specifications and budget give an integer \(1\leq K<L\), with ties resolved by a fixed rule. The resulting allocation satisfies \(\sum_\ell m_\ell=B\).

\textbf{From positional demand to architecture.} The high-demand regions in Section~\ref{sec:positional_demand} identify layers with strong dependence on rotary phase, and coverage calibration further characterizes their upgrade benefits. Low coverage supplies a base budget, while the early matched-marginal scores in Equation~\eeqref{eq:matched_marginal_score} assign additional pairs to the selected layers. This makes \emph{where position matters} an executable architectural decision. Theorem~\ref{thm:calibrated_allocation_main} motivates this selection rule through score calibration and inter-layer interactions.

\subsection{Architectural Instantiations}
\label{sec:allocation_implementations}

\textbf{Shared frequencies and fixed-carrier allocation.} Let \(\mathcal J_\ell(m)\) denote the active frequency-pair set at layer \(\ell\), with \(|\mathcal J_\ell(m)|=m\) and the low-coverage set nested within the high-coverage set. Selected pairs retain their frequencies from the shared bank. The pretrained-model analysis uses stratified pair selection; the allocation experiments use the \texttt{high\_to\_low} order specified in Appendix~\ref{app:exp_setup}. Within each experiment, the frequency bank and selection policy remain fixed across coverage levels. Fixed-carrier allocation applies position-dependent rotation to the selected pairs and the identity elsewhere, with all dimensions participating in attention matching. Queries and keys use matched frequency sets, and all heads in a layer share its coverage configuration.

\textbf{Physical MLA allocation.} For MLA~\citep{liu2024deepseekv2,liu2024deepseek}, we set \(d_{\mathrm{rope}}^{(\ell)}=2m_\ell\), construct rotary-query and cross-head shared positional-key projections at this width, and use the frequencies indexed by \(\mathcal J_\ell(m_\ell)\). The content branch, latent width, and head count remain fixed, while attention scaling follows the full-rotation reference. The allocation vector therefore determines both the layer-wise distribution of positional resources and the corresponding projection structure.

\subsection{Within-Family Profile Transfer}
\label{sec:allocation_transfer}

For source and target depths \(L_s\) and \(L_t\), define normalized depths \(x_i^{(s)}=(i-\tfrac12)/L_s\) and \(x_j^{(t)}=(j-\tfrac12)/L_t\). Normalize the source scores obtained at \(10\%\) training progress by their positive-score mean and map them to target depths:
\begin{equation}
\begin{aligned}
\widehat s_i^{(s)}&=\frac{[a_i^{(s)}]_+}{\widehat\mu_s},\quad
\widehat\mu_s=\frac{1}{L_s}\sum_{u=1}^{L_s}[a_u^{(s)}]_+>0,\quad \widetilde{\boldsymbol s}=W\widehat{\boldsymbol s}^{(s)}.
\end{aligned}
\label{eq:source_profile_transfer}
\end{equation}
Following Appendix~\ref{app:profile_transfer}, \(W\in\mathbb R^{L_t\times L_s}\) uses piecewise-linear interpolation with constant endpoint extension, satisfying \(W_{ji}\geq0\) and \(\sum_iW_{ji}=1\). Substituting the transferred scores and target budget into Equation~\eeqref{eq:budgeted_allocation} gives the target allocation \(\boldsymbol m^{(t)}\). We then construct the target architecture, initialize its weights, and train from scratch with the allocation and frequency sets fixed.

\textbf{Small-model profiles for target architecture design.} Normalized depth maps source high-demand regions to different target depths, while the target budget determines their coverage levels. One early profile supports multiple target depths and rotary budgets, amortizing structural measurement across architecture designs. Theorem~\ref{thm:profile_transfer_main} connects the mapped score discrepancy to target selection quality and motivates this procedure.

\begin{algorithm}[t]
\caption{Profiling-guided layer-wise RoPE allocation}
\label{alg:rope_allocation}
\small
\begin{algorithmic}[1]
\Require Reference model and training recipe; evaluation samples; target architecture; rotary budget and feasible high/low specifications; shared frequency bank; target training data and recipe.
\Ensure Target allocation \(\boldsymbol m^{(t)}\) and pretrained target model.
\State Train the reference model for \(10\%\) of its total steps and obtain layer-wise allocation scores.
\If{using cross-scale transfer}
    \State Normalize and map the scores to target depths using Equation~\eeqref{eq:source_profile_transfer}.
\Else
    \State Use the scores of corresponding layers, retaining their nonnegative parts.
\EndIf
\State Compute the high-coverage count using Equation~\eeqref{eq:budgeted_allocation} and select the highest-scoring layers with the fixed tie rule.
\State Generate the layer-wise pair counts and frequency sets, and verify the total rotary budget.
\State Construct the fixed-carrier or physical MLA architecture and initialize target weights.
\State Fix the allocation and frequency sets, and pretrain with the target data and recipe.
\end{algorithmic}
\end{algorithm}

\subsection{Storage and Computational Complexity}
\label{sec:storage_compute_analysis}

We quantify BF16 weight-plus-KV-cache storage and matrix FLOPs, including configuration acquisition in total training cost.

\textbf{Fixed-carrier allocation.} Each layer retains its complete query/key projections and cache layout, while the rotary configuration changes the pairs receiving phase transformations. Coverage allocations therefore have the same parameter count, weight and cache storage, and principal matrix-multiplication dimensions. The additional computation for demand-guided configurations comes from early reference training and profiling.

\textbf{Physical MLA allocation.} With layer width, head count, content branch, and latent specifications fixed, rotary-query and positional-key projection parameters, positional-key storage, and the associated matrix multiplications scale linearly with total rotary budget \(B\). Uniform and demand-guided allocations at the same budget preserve these aggregate resource costs; reducing the budget decreases the corresponding parameters, storage, and computation. Layer-wise demand determines where these resources are placed, connecting the budget constraint to model quality.

\textbf{Early configuration acquisition and cross-scale reuse.} Let \(\mathcal C_{\mathrm{ref,full}}^{(s)}\), \(\mathcal C_{\mathrm{prof}}^{(s)}\), and \(\mathcal C_{\mathrm{train}}^{(t)}(\boldsymbol m)\) denote full reference training, source profiling, and target training costs, respectively. When one source profile serves \(A\) target configurations, the cost attributed to each target is
\begin{equation}
\mathcal C_{\mathrm{E2E}}(\boldsymbol m)
=
\mathcal C_{\mathrm{train}}^{(t)}(\boldsymbol m)
+
\frac{0.1\mathcal C_{\mathrm{ref,full}}^{(s)}+\mathcal C_{\mathrm{prof}}^{(s)}}{A}.
\label{eq:e2e_cost_main}
\end{equation}
Smaller models reduce reference-training and layer-wise probing computation, and profile reuse further amortizes configuration acquisition. Appendix~\ref{app:configuration_cost} derives the general cost relations; Appendix~\ref{app:experiment_details} provides storage and FLOPs counts for the experimental models and the resulting resource comparisons.

\section{Experiments}
\label{sec:experiments}

\subsection{Experimental Setup}
\label{sec:exp_setup}

We pretrain Dense Transformers~\citep{vaswani2017attention} and MLA Transformers~\citep{liu2024deepseekv2,liu2024deepseek} at 130M, 350M, and 1B scales on approximately 2.6B, 7B, and 20B tokens, respectively. Profiles are collected after \(10\%\) of reference training, and allocated target models are trained from scratch. The 130M experiments use same-scale profiles, while larger models reuse within-family 130M profiles; target-direct profiling is evaluated separately (Appendix~\ref{app:experiment_details}).

\textbf{Configurations and metrics.} Full/Uniform50 assign rotary widths 64/32 to every layer; Random50/Demand-guided assign width 64 to half the layers and zero to the rest, selected randomly or by early scores. Dense keeps query/key widths fixed, whereas MLA physically narrows rotary branches. Tables~\ref{tab:dense_main}--\ref{tab:mla_main} report validation loss and normalized storage/end-to-end FLOPs. Storage counts BF16 weights and a 4K KV cache; FLOPs include profiling costs.

\subsection{Dense Quality and Cost}
\label{sec:dense_results}

\begin{table}[t!]
\centering
\caption{Dense validation loss (top) and theoretical storage / end-to-end FLOPs ratios to Full (bottom).}
\label{tab:dense_main}
\setlength{\tabcolsep}{6pt}
\renewcommand{\arraystretch}{1.2}
\renewcommand{\ropeResult}[3]{%
  \shortstack[c]{#1\\[-2pt]{\tiny \(#2\,/\,#3\)}}%
}
\begin{threeparttable}
\begin{adjustbox}{max width=\columnwidth}
\begin{tabular}{@{}lcccc@{}}
\toprule
Model & Full & Uniform50 & Random50 &
\makecell[c]{Demand-\\guided} \\
\midrule
130M
& \ropeResult{3.374035}{1.000}{1.0000}
& \ropeResult{3.379236}{1.000}{1.0000}
& \ropeResult{3.378968}{1.000}{1.0000}
& \ropeResult{\textbf{3.368072}}{1.000}{1.1003}
\\[2pt]
350M
& \ropeResult{3.026883}{1.000}{1.0000}
& \ropeResult{3.032038}{1.000}{1.0000}
& \ropeResult{3.031077}{1.000}{1.0000}
& \ropeResult{\textbf{3.023735}}{1.000}{1.0147}
\\[2pt]
1B
& \ropeResult{2.785044}{1.000}{1.0000}
& \ropeResult{2.797076}{1.000}{1.0000}
& \ropeResult{2.792187}{1.000}{1.0000}
& \ropeResult{\textbf{2.757636}}{1.000}{1.0019}
\\
\bottomrule
\end{tabular}
\end{adjustbox}
\end{threeparttable}
\end{table}

\begin{table}[t!]
\centering
\caption{MLA validation loss (top) and theoretical storage / end-to-end FLOPs ratios to Full (bottom).}
\label{tab:mla_main}
\setlength{\tabcolsep}{6pt}
\renewcommand{\arraystretch}{1.2}
\renewcommand{\ropeResult}[3]{%
  \shortstack[c]{#1\\[-2pt]{\tiny \(#2\,/\,#3\)}}%
}
\begin{threeparttable}
\begin{adjustbox}{max width=\columnwidth}
\begin{tabular}{@{}lcccc@{}}
\toprule
Model & Full & Uniform50 &
\makecell[c]{Random50\\mean} &
\makecell[c]{Demand-\\guided} \\
\midrule
130M
& \ropeResult{\textbf{3.377954}}{1.000}{1.0000}
& \ropeResult{3.425247}{0.961}{0.9302}
& \ropeResult{3.412386}{0.961}{0.9302}
& \ropeResult{3.381707}{0.961}{1.0305}
\\[2pt]
350M
& \ropeResult{3.052147}{1.000}{1.0000}
& \ropeResult{3.067263}{0.965}{0.9326}
& \ropeResult{3.062779}{0.965}{0.9326}
& \ropeResult{\textbf{3.051440}}{0.965}{0.9476}
\\[2pt]
1B
& \ropeResult{\textbf{2.750373}}{1.000}{1.0000}
& \shortstack[c]{NA\\[-2pt]{\tiny\phantom{0.000/0.0000}}}
& \shortstack[c]{NA\\[-2pt]{\tiny\phantom{0.000/0.0000}}}
& \ropeResult{2.752074}{0.964}{0.9396}
\\
\bottomrule
\end{tabular}
\end{adjustbox}
\end{threeparttable}
\end{table}

\textbf{Allocation quality.} Demand-guided allocation outperforms Full, Uniform50, and Random50 at all three scales (Table~\ref{tab:dense_main}). Since query/key widths and storage remain unchanged, the gains reflect how rotary coverage is placed across depth, rather than changes in model capacity. The strongest improvement occurs at 1B using a 130M source profile. Neither uniform half coverage nor random placement consistently outperforms Full at these scales.

\textbf{Configuration cost.} Fixed-carrier allocation preserves matrix-multiplication dimensions. With source acquisition included, end-to-end FLOPs are \(1.0147\times\) and \(1.0019\times\) Full at 350M and 1B; model storage is unchanged.

\subsection{MLA Quality and Physical Resources}
\label{sec:mla_results}

\textbf{Matched-budget comparisons.} MLA physically narrows rotary-query and positional-key branches, unlike fixed-carrier Dense. At both 130M and 350M, demand-guided allocation beats uniform and random half-budget controls. Their equal aggregate rotary width isolates the effect of layer placement (Table~\ref{tab:mla_main}).

\textbf{Quality--efficiency trade-off.} At 350M, the transferred configuration slightly improves on Full while using \(0.965\times\) storage and \(0.9476\times\) end-to-end FLOPs. At 1B, it nearly matches Full with \(0.964\times\) storage and \(0.9396\times\) FLOPs; budget-matched controls are unavailable at this scale. The 130M configuration also approaches Full, although same-scale profile acquisition raises its end-to-end FLOPs ratio to \(1.0305\times\). Thus, net computational savings depend on the cost of acquiring the source profile for architecture selection.

\subsection{Layer Placement and Rotary Budget}
\label{sec:exp_layer_budget}

\textbf{Layer-placement ablations.} At 130M MLA, demand-guided allocation outperforms reversed ranking, last-ten-layer placement, and the mean of three random layer sets; transferred 350M MLA also beats the mean of two random layer sets. These matched-budget tests favor demand-guided placement over arbitrary alternatives (Appendix~\ref{app:layer_ablation}).

\textbf{Budget ablation.} Profile50 improves on Profile75 at 130M MLA despite using fewer rotary pairs, indicating a non-monotone validation-loss response. The ranking is fixed, while the number of high-coverage layers changes across budgets (Appendix~\ref{app:budget_ablation}).

\subsection{Cross-Scale Transfer and Replication}
\label{sec:exp_transfer_repeat}

\textbf{Source-profile transfer.} A 130M early profile determines 350M and 1B allocations without target-scale probing. Both 350M Dense and MLA improve over matched-budget controls, with acquisition costs included in Tables~\ref{tab:dense_main}--\ref{tab:mla_main}.

\textbf{Paired replications.} Across two paired 350M MLA training groups, Transfer50 improves over Uniform50 by \(0.011348\) validation loss on average. A separate target-direct comparison also improves 1B Dense against its paired baseline. Comparisons are made within training groups before averaging (Appendix~\ref{app:paired_transfer}).

\section{Conclusion}
\label{sec:conclusion}

We presented profiling-guided layer-wise RoPE allocation, translating model-specific positional demand into budgeted architecture design. Early profiles support within-family cross-scale transfer, with experiments showing matched-budget validation loss improvements and resource analysis identifying favorable quality--storage--compute trade-offs for physical MLA.

\section{Limitations}
\label{sec:limitations}

Our fixed high/low design simplifies architecture construction, while its coarse granularity leaves room for finer resource allocation. Transfer relies on within-family structural correspondence, motivating further exploration of reusable early profiles across broader architectures and longer contexts.

{
\small
\bibliography{reference}
\bibliographystyle{abbrv}
}

\newpage
\appendix
\begin{center}
\Large
\textbf{Appendix}
\end{center}
\vspace{-2mm}
\tableofcontents
\vspace{3mm}
\newpage
% Required packages: amsmath, amssymb, amsthm.
% Reuse the existing theorem, lemma, and proof environments.

\section{Analysis of Profiling-Guided Rotary Allocation}
\label{app:allocation_analysis}

\subsection{Setup and Reference-Consistent Loss Decomposition}
\label{app:allocation_setup}

\textbf{Feasible high/low configurations.} Let \([L]=\{1,\ldots,L\}\). At layer \(\ell\), choose integer pair counts \(0\leq m_{\ell,L}<m_{\ell,H}=M_\ell\) with a common increment \(m_{\ell,H}-m_{\ell,L}=\Delta m>0\). Fix an integer \(1\leq K<L\) and a budget
\begin{equation}
B=\sum_{\ell=1}^{L}m_{\ell,L}+K\Delta m.
\label{eq:app_highlow_budget}
\end{equation}
For a high-coverage set \(S\subseteq[L]\), define
\begin{equation}
\begin{gathered}
m_\ell(S)=\begin{cases}
m_{\ell,H},&\ell\in S,\\
m_{\ell,L},&\ell\notin S,
\end{cases}\quad\mathcal S_K=\{S\subseteq[L]:|S|=K\}.
\end{gathered}
\label{eq:app_highlow_configuration}
\end{equation}
Indeed, \(\sum_\ell m_\ell(S)=\sum_\ell m_{\ell,L}+|S|\Delta m\), so the budget in Equation~\eeqref{eq:app_highlow_budget} is equivalent to \(S\in\mathcal S_K\). The low coverage at layer \(\ell\) is \(\rho_{\ell,L}=m_{\ell,L}/M_\ell\), and the high coverage is one. Each layer uses the nested frequency-pair subsets specified in Section~\ref{sec:preliminaries}.

\textbf{Frozen-checkpoint objective.} Fix model weights and an evaluation distribution \(\mathcal P\), which may be the empirical distribution on a fixed evaluation sample set. Let \(\mathcal L(x;\boldsymbol m(S))\) denote the mean per-token validation loss on text sample \(x\) under allocation \(\boldsymbol m(S)\), and define
\begin{equation}
F(S)=\mathbb E_{x\sim\mathcal P}\bigl[\mathcal L(x;\boldsymbol m(S))\bigr],
\qquad S\subseteq[L].
\label{eq:app_configuration_loss}
\end{equation}
Assume these losses are finite. Within each model, all evaluations use its fixed intervention rule: unselected frequency pairs receive identity rotation in Dense, while unselected positional coordinates are zeroed in MLA, as specified in Appendix~\ref{app:exp_setup}. The shared frequency bank and attention scaling remain fixed across allocations. Write \(F_H=F([L])\) for the all-high reference loss and define
\begin{equation}
G_\ell=F([L]\setminus\{\ell\})-F_H.
\label{eq:app_reference_upgrade}
\end{equation}
Thus, \(G_\ell\) is the benefit of restoring layer \(\ell\) from low to high coverage while all other layers remain high. For the identity intervention with the same checkpoint and \(\mathcal P\) equal to the empirical distribution on \(\mathcal D_{\mathrm{cal}}\), this recovers \(\widehat G_\ell(\rho_{\ell,L},1)\) in Equation~\eeqref{eq:coverage_upgrade_gain}.

\textbf{Joint-intervention residual.} For every \(S\subseteq[L]\), set
\begin{align}
I(S)&=F(S)-F_H-\sum_{\ell\notin S}G_\ell,
\label{eq:app_interaction_definition}\\
C&=F_H+\sum_{\ell=1}^{L}G_\ell.
\label{eq:app_allocation_constant}
\end{align}
These definitions give the exact identity
\begin{equation}
F(S)=C-\sum_{\ell\in S}G_\ell+I(S).
\label{eq:app_anchored_decomposition}
\end{equation}
The residual \(I(S)\) measures the difference between the joint loss change and the sum of single-layer loss changes at the same all-high reference. In particular, \(I([L])=0\) and \(I([L]\setminus\{\ell\})=0\). For a joint change, its magnitude can be controlled through mixed finite differences, as shown next.

\begin{lemma}[Control of joint-intervention effects]
\label{lem:joint_intervention_control}
Let \(q=L-K\), define \(f(T)=F([L]\setminus T)\), and suppose there are symmetric constants \(b_{ij}=b_{ji}\geq0\) such that
\begin{equation}
\left|\begin{aligned}
&f(U\cup\{i,j\})-f(U\cup\{i\})\\
&\quad-f(U\cup\{j\})+f(U)
\end{aligned}\right|\leq b_{ij}
\label{eq:app_mixed_difference_bound}
\end{equation}
for all distinct \(i,j\) and every \(U\subseteq[L]\setminus\{i,j\}\). Then every \(S\in\mathcal S_K\) satisfies
\begin{equation}
|I(S)|\leq\sum_{\substack{i<j\\ i,j\notin S}}b_{ij}.
\label{eq:app_interaction_pair_bound}
\end{equation}
Consequently, one may take
\begin{equation}
\delta=\max_{S\in\mathcal S_K}\sum_{\substack{i<j\\ i,j\notin S}}b_{ij}.
\label{eq:app_interaction_delta}
\end{equation}
If \(b_{ij}\leq b\) for all pairs, then \(\delta\leq\binom{q}{2}b\).
\end{lemma}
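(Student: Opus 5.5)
Write \(T=[L]\setminus S\), so \(|T|=q\) and \(f(\emptyset)=F_H\). Since \(f(\{\ell\})=F([L]\setminus\{\ell\})\), Equation~\eeqref{eq:app_reference_upgrade} gives \(G_\ell=f(\{\ell\})-f(\emptyset)\). Equation~\eeqref{eq:app_interaction_definition} then reads
\begin{equation*}
I(S)=f(T)-f(\emptyset)-\sum_{\ell\in T}\bigl[f(\{\ell\})-f(\emptyset)\bigr].
\end{equation*}
So \(I(S)\) is the second-order remainder of the set function \(f\) expanded around \(\emptyset\). The plan is to write this remainder exactly as a sum of mixed finite differences of the form in Equation~\eeqref{eq:app_mixed_difference_bound}, one for each unordered pair in \(T\), and then apply the triangle inequality.

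To build the telescoping, enumerate \(T=\{t_1,\ldots,t_q\}\) and set \(T_k=\{t_1,\ldots,t_k\}\). Then
\begin{equation*}
f(T)-f(\emptyset)=\sum_{k=1}^{q}\bigl[f(T_{k-1}\cup\{t_k\})-f(T_{k-1})\bigr],
\end{equation*}
and subtracting the first-order terms gives
\begin{equation*}
I(S)=\sum_{k=1}^{q}\Bigl(\bigl[f(T_{k-1}\cup\{t_k\})-f(T_{k-1})\bigr]-\bigl[f(\{t_k\})-f(\emptyset)\bigr]\Bigr).
\end{equation*}
Each bracketed difference compares the marginal effect of adding \(t_k\) at base \(T_{k-1}\) with its effect at base \(\emptyset\). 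I telescope this difference again along the chain \(\emptyset=T_0\subset T_1\subset\cdots\subset T_{k-1}\). Define \(\Delta_k(U)=f(U\cup\{t_k\})-f(U)\). Then the \(k\)-th summand equals \(\sum_{i=1}^{k-1}\bigl[\Delta_k(T_i)-\Delta_k(T_{i-1})\bigr]\). With \(U=T_{i-1}\), each term is exactly the mixed difference
\begin{equation*}
f(U\cup\{t_i,t_k\})-f(U\cup\{t_i\})-f(U\cup\{t_k\})+f(U).
\end{equation*}
Here \(U\subseteq[L]\setminus\{t_i,t_k\}\) because \(i<k\), so the hypothesis bounds this term in absolute value by \(b_{t_it_k}\).

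Summing over \(1\le i<k\le q\) uses each unordered pair of \(T\) exactly once, so
\begin{equation*}
|I(S)|\le\sum_{\{i,j\}\subseteq T}b_{ij}.
\end{equation*}
Symmetry of \(b\) lets us reindex this as \(\sum_{i<j,\;i,j\notin S}b_{ij}\), which is Equation~\eeqref{eq:app_interaction_pair_bound}. Taking the maximum over \(S\in\mathcal S_K\) gives the stated \(\delta\). A uniform bound \(b_{ij}\le b\) gives \(\delta\le\binom q2 b\), since \(T\) has \(\binom q2\) pairs.

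The only delicate step is the double telescoping: one must check that the inner sum exactly reproduces each marginal-effect discrepancy, so that each pair is counted exactly once and no boundary terms are lost. It is worth writing out the case \(q=2\), where \(I(S)\) is a single mixed difference, and the case \(q=3\) to confirm the indexing. No earlier result beyond the definitions in Equations~\eeqref{eq:app_reference_upgrade}--\eeqref{eq:app_anchored_decomposition} is needed.
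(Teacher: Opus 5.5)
Your proposal is correct and follows the paper's proof essentially step for step. The paper also telescopes $f(T)-f(\varnothing)$ along a chain $T_0\subset T_1\subset\cdots\subset T_q$, expands each discrepancy $\Delta_{i_p}f(T_{p-1})-\Delta_{i_p}f(\varnothing)$ a second time into mixed finite differences on the backgrounds $T_{a-1}$, and applies the triangle inequality with each unordered pair of $[L]\setminus S$ counted exactly once, including the same empty-sum check for the first element and for $q=1$.
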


\begin{proof}
\textbf{Step 1: Express the joint change as sequential interventions.} Fix \(S\in\mathcal S_K\), write its complement as \(T=[L]\setminus S=\{i_1,\ldots,i_q\}\), and set \(T_p=\{i_1,\ldots,i_p\}\), with \(T_0=\varnothing\). For \(i\notin U\), define the first finite difference \(\Delta_i f(U)=f(U\cup\{i\})-f(U)\). Since \(G_i=\Delta_i f(\varnothing)\), telescoping gives
\begin{align}
f(T)-f(\varnothing)&=\sum_{p=1}^{q}\Delta_{i_p}f(T_{p-1}),\\
I(S)&=\sum_{p=1}^{q}\left[\Delta_{i_p}f(T_{p-1})-\Delta_{i_p}f(\varnothing)\right].
\label{eq:app_interaction_telescope}
\end{align}

\textbf{Step 2: Expand each background-dependent change.} For \(p\geq2\), telescope along the previously changed layers:
\begin{align}
&\Delta_{i_p}f(T_{p-1})-\Delta_{i_p}f(\varnothing)=\sum_{a=1}^{p-1}\left[\Delta_{i_p}f(T_a)-\Delta_{i_p}f(T_{a-1})\right]=\sum_{a=1}^{p-1}\left[\begin{aligned}
&f(T_{a-1}\cup\{i_a,i_p\})\\
&-f(T_{a-1}\cup\{i_a\})\\
&-f(T_{a-1}\cup\{i_p\})+f(T_{a-1})
\end{aligned}\right].
\label{eq:app_mixed_telescope}
\end{align}
Each bracket is a mixed finite difference on a valid background, since \(i_a,i_p\notin T_{a-1}\).

\textbf{Step 3: Sum the pairwise bounds.} Applying Equation~\eeqref{eq:app_mixed_difference_bound} to every bracket and then using the triangle inequality yields
\begin{equation}
|I(S)|\leq\sum_{p=2}^{q}\sum_{a=1}^{p-1}b_{i_a i_p}
=\sum_{\substack{i<j\\ i,j\notin S}}b_{ij}.
\end{equation}
Every unordered pair in \(T\) appears exactly once. Taking the maximum over \(S\in\mathcal S_K\) proves Equation~\eeqref{eq:app_interaction_delta}; bounding each of its \(\binom{q}{2}\) summands by \(b\) gives the final assertion. For \(q=1\), the sums are empty and \(I(S)=0\).
\end{proof}

\subsection{Calibrated Profiles and Budgeted Layer Selection}
\label{app:calibrated_allocation}

The following theorem formalizes Theorem~\ref{thm:calibrated_allocation_main}. For the allocation method, take \(s_\ell=[a_\ell]_+\), or a common positive normalization of these priorities, where \(a_\ell\) is the recorded matched-marginal score obtained from Equation~\eeqref{eq:matched_marginal_score} under the intervention rule fixed above. The calibration condition relates these priorities to the all-high reference gains in Equation~\eeqref{eq:app_reference_upgrade}; its error \(\varepsilon\) accounts for differences in probing background and coverage interval, together with score estimation and processing.

\begin{theorem}[Allocation loss under calibrated scores]
\label{thm:calibrated_allocation_full}
For the loss and gains defined above, suppose a score vector \(\boldsymbol s\in\mathbb R^L\) and constants \(\alpha>0\), \(\varepsilon\geq0\), and \(\delta\geq0\) satisfy
\begin{equation}
\begin{aligned}
|G_\ell-\alpha s_\ell|&\leq\varepsilon&&\text{for all }\ell\in[L],\\
|I(S)|&\leq\delta&&\text{for all }S\in\mathcal S_K.
\end{aligned}
\label{eq:app_calibration_conditions}
\end{equation}
Let \(\widehat S\) contain the \(K\) largest scores, with ties resolved by a fixed rule. Let \(S^\star\) minimize \(F\) over \(\mathcal S_K\). If \(r=|S^\star\setminus\widehat S|\), then
\begin{equation}
0\leq F(\widehat S)-F(S^\star)
\leq2r\varepsilon+2\delta
\leq2K\varepsilon+2\delta.
\label{eq:app_allocation_regret}
\end{equation}
In particular, \(\varepsilon=\delta=0\) implies \(F(\widehat S)=\min_{S\in\mathcal S_K}F(S)\).
\end{theorem}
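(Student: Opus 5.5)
The plan is to use the exact decomposition \eqref{eq:app_anchored_decomposition}, $F(S)=C-\sum_{\ell\in S}G_\ell+I(S)$, to write the regret as a difference of summed gains plus a difference of residuals. The lower bound $0\leq F(\widehat S)-F(S^\star)$ is immediate, since $\widehat S\in\mathcal S_K$ and $S^\star$ minimizes $F$ over $\mathcal S_K$. For the upper bound, subtracting the two instances of the decomposition gives
\begin{equation*}
F(\widehat S)-F(S^\star)=\sum_{\ell\in S^\star}G_\ell-\sum_{\ell\in\widehat S}G_\ell+I(\widehat S)-I(S^\star).
\end{equation*}
The residual difference is bounded by $2\delta$ via the second condition in \eqref{eq:app_calibration_conditions}, since both sets lie in $\mathcal S_K$.

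Next I would cancel the common layers. Set $A=S^\star\setminus\widehat S$ and $A'=\widehat S\setminus S^\star$. Because $|S^\star|=|\widehat S|=K$, we have $|A|=|A'|=r$, and the gain difference equals $\sum_{\ell\in A}G_\ell-\sum_{\ell\in A'}G_\ell$. Applying the calibration condition termwise ($G_\ell\leq\alpha s_\ell+\varepsilon$ on $A$, and $G_\ell\geq\alpha s_\ell-\varepsilon$ on $A'$) bounds this by
\begin{equation*}
\alpha\Bigl(\sum_{\ell\in A}s_\ell-\sum_{\ell\in A'}s_\ell\Bigr)+2r\varepsilon.
\end{equation*}

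The key step is to show that the score term is nonpositive. Since $\widehat S$ consists of the $K$ largest scores, every $\ell\in A$, which lies outside $\widehat S$, satisfies $s_\ell\leq s_{\ell'}$ for every $\ell'\in A'\subseteq\widehat S$. This holds under any fixed tie rule, because ties only affect which equal-score layers are chosen. Pairing $A$ and $A'$ through an arbitrary bijection then gives $\sum_{A}s_\ell\leq\sum_{A'}s_\ell$, and $\alpha>0$ preserves the sign. Combining the pieces yields $2r\varepsilon+2\delta$, and $r\leq K$ gives the final bound. When $\varepsilon=\delta=0$ the regret is at most $0$, so $\widehat S$ is optimal.

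I expect the only delicate point to be the exchange step, specifically confirming that the top-$K$ property gives the needed pointwise domination between $A$ and $A'$ even when there are ties. Everything else is bookkeeping with the exact identity.
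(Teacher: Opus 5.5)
Your proposal is correct and follows the paper's proof step by step. The shared steps are the exact decomposition $F(S)=C-\sum_{\ell\in S}G_\ell+I(S)$, cancellation of gains on $S^\star\cap\widehat S$, termwise calibration on the two $r$-element difference sets giving $2r\varepsilon$, the top-$K$ pairing inequality $\sum_{\ell\in S^\star\setminus\widehat S}s_\ell\leq\sum_{\ell\in\widehat S\setminus S^\star}s_\ell$ (valid under any tie rule, as you note), and the $2\delta$ residual bound. Your termwise inequalities $G_\ell\leq\alpha s_\ell+\varepsilon$ and $G_\ell\geq\alpha s_\ell-\varepsilon$ are slightly cleaner than the paper's Step 2, whose displayed chain writes an equality where an inequality is meant.
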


\begin{proof}
\textbf{Step 1: Establish the top-\(K\) score inequality.} Since \(\mathcal S_K\) is finite and \(F\) is finite, a minimizing set \(S^\star\) exists. Define the disjoint sets \(S_+=S^\star\setminus\widehat S\) and \(S_-=\widehat S\setminus S^\star\). Both have cardinality \(r\), because \(|S^\star|=|\widehat S|=K\). By the top-\(K\) construction, every score indexed by \(S_-\) is at least every score indexed by \(S_+\). Pairing their elements therefore gives
\begin{equation}
\sum_{\ell\in S^\star}s_\ell-\sum_{\ell\in\widehat S}s_\ell
=\sum_{\ell\in S_+}s_\ell-\sum_{\ell\in S_-}s_\ell\leq0.
\label{eq:app_topk_inequality}
\end{equation}
This also holds when \(r=0\), in which case both differences vanish.

\textbf{Step 2: Translate score ordering into a gain bound.} Write \(e_\ell=G_\ell-\alpha s_\ell\), so that \(|e_\ell|\leq\varepsilon\). The gains on the shared indices cancel. Using \(\alpha>0\) and Equation~\eeqref{eq:app_topk_inequality}, we obtain
\begin{align}
&\sum_{\ell\in S^\star}G_\ell-\sum_{\ell\in\widehat S}G_\ell=\alpha\left(\sum_{\ell\in S^\star}s_\ell-\sum_{\ell\in\widehat S}s_\ell\right)+\sum_{\ell\in S_+}e_\ell-\sum_{\ell\in S_-}e_\ell=\sum_{\ell\in S_+}|e_\ell|+\sum_{\ell\in S_-}|e_\ell|\leq2r\varepsilon.
\label{eq:app_gain_comparison}
\end{align}

\textbf{Step 3: Account for joint-intervention effects.} Applying the exact decomposition in Equation~\eeqref{eq:app_anchored_decomposition} to both configurations gives
\begin{align}
&F(\widehat S)-F(S^\star)=\sum_{\ell\in S^\star}G_\ell-\sum_{\ell\in\widehat S}G_\ell+I(\widehat S)-I(S^\star)\leq2r\varepsilon+|I(\widehat S)|+|I(S^\star)|\leq2r\varepsilon+2\delta.
\end{align}
Since \(r\leq K\), this proves the upper bound. The lower bound follows from the optimality of \(S^\star\).

\textbf{Step 4: Recover the exact-calibration case.} If \(\varepsilon=\delta=0\), the upper and lower bounds both equal zero. Thus \(\widehat S\) minimizes the configuration loss over \(\mathcal S_K\), proving the final assertion.
\end{proof}

The following consequence makes the connection between layer-wise demand differences and budget-preserving configuration changes explicit.

\begin{lemma}[Benefit of a demand-guided layer exchange]
\label{lem:demand_guided_exchange}
Under the conditions of Theorem~\ref{thm:calibrated_allocation_full}, let \(S\in\mathcal S_K\), choose \(i\notin S\) and \(j\in S\), and set \(S'=(S\setminus\{j\})\cup\{i\}\). Then \(S'\in\mathcal S_K\) and
\begin{equation}
F(S)-F(S')\geq\alpha(s_i-s_j)-2\varepsilon-2\delta.
\label{eq:app_exchange_gain}
\end{equation}
Consequently, \(\alpha(s_i-s_j)>2\varepsilon+2\delta\) implies \(F(S')<F(S)\).
\end{lemma}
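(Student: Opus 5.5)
The plan is to reuse the exact decomposition in Equation~\eeqref{eq:app_anchored_decomposition} for the two configurations \(S\) and \(S'\), in the same way as Step~3 of the proof of Theorem~\ref{thm:calibrated_allocation_full}.

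First, feasibility. Since \(j\in S\) and \(i\notin S\), removing \(j\) leaves \(K-1\) elements, none equal to \(i\). Adding \(i\) then gives \(|S'|=K\), so \(S'\in\mathcal S_K\). In particular the interaction bound \(|I(S')|\leq\delta\) from Equation~\eeqref{eq:app_calibration_conditions} applies to \(S'\) as well as to \(S\).

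Second, the difference. Writing \(F(S)=C-\sum_{\ell\in S}G_\ell+I(S)\) and likewise for \(S'\), the constant \(C\) and all gains on \(S\cap S'=S\setminus\{j\}\) cancel. This leaves
\begin{equation}
F(S)-F(S')=G_i-G_j+I(S)-I(S').
\end{equation}

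Third, the calibration step. Write \(G_i=\alpha s_i+e_i\) and \(G_j=\alpha s_j+e_j\) with \(|e_\ell|\leq\varepsilon\). This gives \(G_i-G_j\geq\alpha(s_i-s_j)-2\varepsilon\). Bounding \(I(S)-I(S')\geq-|I(S)|-|I(S')|\geq-2\delta\) then yields Equation~\eeqref{eq:app_exchange_gain}.

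The final implication is immediate. If \(\alpha(s_i-s_j)>2\varepsilon+2\delta\), the right-hand side of Equation~\eeqref{eq:app_exchange_gain} is strictly positive, so \(F(S')<F(S)\).

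There is no real obstacle here. The only point needing care is confirming that \(S'\) lies in \(\mathcal S_K\), so that the \(\delta\)-bound, which is assumed only on \(\mathcal S_K\), is legitimately available for \(I(S')\). The first step handles this.
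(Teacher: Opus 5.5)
Your proposal is correct and follows the paper's proof essentially step for step. Like the paper, you check that \(S'\in\mathcal S_K\) (the paper also restates the budget, which is equivalent to \(|S'|=K\)), reduce to \(F(S)-F(S')=G_i-G_j+I(S)-I(S')\) via the anchored decomposition, and then apply the one-sided calibration and interaction bounds to get \(\alpha(s_i-s_j)-2\varepsilon-2\delta\).
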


\begin{proof}
\textbf{Step 1: Verify feasibility of the exchange.} Since \(j\in S\) and \(i\notin S\), removing \(j\) and adding \(i\) preserves cardinality, giving \(|S'|=|S|=K\). The common upgrade increment \(\Delta m\) therefore gives
\begin{equation}
\sum_{\ell=1}^{L}m_\ell(S')=\sum_{\ell=1}^{L}m_{\ell,L}+K\Delta m=B.
\end{equation}
Thus, \(S'\) is feasible under the same rotary budget.

\textbf{Step 2: Express the exact loss change.} Apply Equation~\eeqref{eq:app_anchored_decomposition} to \(S\) and \(S'\). Their shared indices cancel, yielding
\begin{align}
&F(S)-F(S')=\sum_{\ell\in S'}G_\ell-\sum_{\ell\in S}G_\ell+I(S)-I(S')=G_i-G_j+I(S)-I(S').
\label{eq:app_exchange_identity}
\end{align}

\textbf{Step 3: Bound calibration and interaction errors.} The calibration conditions imply \(G_i\geq\alpha s_i-\varepsilon\) and \(G_j\leq\alpha s_j+\varepsilon\). Since both sets are feasible, \(I(S)\geq-\delta\) and \(I(S')\leq\delta\). Substituting these inequalities into Equation~\eeqref{eq:app_exchange_identity} gives
\begin{align}
F(S)-F(S')\geq(\alpha s_i-\varepsilon)-(\alpha s_j+\varepsilon)-2\delta=\alpha(s_i-s_j)-2\varepsilon-2\delta.
\end{align}
This proves Equation~\eeqref{eq:app_exchange_gain}. If \(\alpha(s_i-s_j)>2\varepsilon+2\delta\), the right-hand side is strictly positive, and hence \(F(S')<F(S)\).
\end{proof}

\subsection{Normalized-Depth Profile Transfer}
\label{app:profile_transfer}

\textbf{Profiles and interpolation.} Consider a source model with \(L_s\) layers and a target model with \(L_t\) layers in the same family. For \(u\in\{s,t\}\), define normalized depth and a normalized reference allocation profile by
\begin{equation}
\begin{aligned}
x_\ell^{(u)}&=\frac{\ell-\tfrac12}{L_u},
\qquad s_\ell^{(u)}=\frac{[\bar a_\ell^{(u)}]_+}{\mu_u},\qquad \mu_u=\frac{1}{L_u}\sum_{j=1}^{L_u}[\bar a_j^{(u)}]_+>0.
\end{aligned}
\label{eq:app_normalized_profiles}
\end{equation}
where \(\bar a_\ell^{(u)}\) is the population paired allocation score at the fixed reference checkpoint, obtained by replacing the sample average in Equation~\eeqref{eq:matched_marginal_score} with expectation under the chosen evaluation distribution and model-specific intervention rule. Let \(\widehat{\boldsymbol s}^{(s)}\) be the recorded source allocation profile, normalized as in Equation~\eeqref{eq:source_profile_transfer}, with a positive denominator. The source-profile error below includes sampling and score aggregation. Interpolation from source depths to target depths is represented by \(W\in\mathbb R^{L_t\times L_s}\) satisfying
\begin{equation}
W_{ji}\geq0,
\qquad
\sum_{i=1}^{L_s}W_{ji}=1,
\qquad
\widetilde{\boldsymbol s}=W\widehat{\boldsymbol s}^{(s)}.
\label{eq:app_profile_interpolation}
\end{equation}
Piecewise-linear interpolation with constant extension beyond the source endpoints has these properties: an interior target depth uses nonnegative weights on its two neighboring source depths, and an exterior depth uses the nearest source endpoint. The target allocation selects the \(K_t\) largest entries of \(\widetilde{\boldsymbol s}\).

\begin{theorem}[Allocation loss under profile transfer]
\label{thm:profile_transfer_full}
Let \(F_t\), \(G_j^{(t)}\), and \(I_t\) be the target-model quantities defined in Appendix~\ref{app:allocation_setup}, with feasible sets \(\mathcal S_{K_t}\). Suppose
\begin{equation}
|G_j^{(t)}-\alpha_t s_j^{(t)}|\leq\varepsilon_t,
\qquad
|I_t(S)|\leq\delta_t
\label{eq:app_target_calibration}
\end{equation}
for every target layer \(j\) and every \(S\in\mathcal S_{K_t}\), where \(\alpha_t>0\). Suppose also that the normalized source measurement error and depth-mapped profile discrepancy satisfy
\begin{equation}
\begin{aligned}
\|\widehat{\boldsymbol s}^{(s)}-\boldsymbol s^{(s)}\|_\infty&\leq\varepsilon_s,\quad
\|W\boldsymbol s^{(s)}-\boldsymbol s^{(t)}\|_\infty\leq\eta_{\mathrm{depth}}.
\end{aligned}
\label{eq:app_transfer_errors}
\end{equation}
Let \(\widehat S_{s\to t}\) be a top-\(K_t\) set of \(\widetilde{\boldsymbol s}\), let \(S_t^\star\) minimize \(F_t\) over \(\mathcal S_{K_t}\), and define \(\eta=\|\widetilde{\boldsymbol s}-\boldsymbol s^{(t)}\|_\infty\). Then \(\eta\leq\varepsilon_s+\eta_{\mathrm{depth}}\), and
\begin{align}
&0\leq F_t(\widehat S_{s\to t})-F_t(S_t^\star)\leq2K_t(\varepsilon_t+\alpha_t\eta)+2\delta_t\leq2K_t\bigl[\varepsilon_t+\alpha_t(\varepsilon_s+\eta_{\mathrm{depth}})\bigr]+2\delta_t.
\label{eq:app_transfer_regret}
\end{align}
\end{theorem}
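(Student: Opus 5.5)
The proof has two parts. First, a triangle-inequality bound on the transfer discrepancy \(\eta\). Second, a reduction of the regret bound to Theorem~\ref{thm:calibrated_allocation_full}, with the transferred scores \(\widetilde{\boldsymbol s}\) playing the role of the calibrated score vector.

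\textbf{Step 1: bound \(\eta\).} Decompose
\[
\widetilde{\boldsymbol s}-\boldsymbol s^{(t)}=W\bigl(\widehat{\boldsymbol s}^{(s)}-\boldsymbol s^{(s)}\bigr)+\bigl(W\boldsymbol s^{(s)}-\boldsymbol s^{(t)}\bigr).
\]
The rows of \(W\) are nonnegative and sum to one. Each entry of \(W\boldsymbol v\) is therefore a convex combination of entries of \(\boldsymbol v\), so \(\|W\boldsymbol v\|_\infty\leq\|\boldsymbol v\|_\infty\). In other words, \(W\) is an \(\ell_\infty\)-contraction. Combining this with Equation~\eeqref{eq:app_transfer_errors} and the triangle inequality gives \(\eta\leq\varepsilon_s+\eta_{\mathrm{depth}}\), which is the first claim and also yields the final inequality of Equation~\eeqref{eq:app_transfer_regret}.

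\textbf{Step 2: calibration of the transferred scores.} For every target layer \(j\),
\[
|G_j^{(t)}-\alpha_t\widetilde s_j|\leq|G_j^{(t)}-\alpha_t s_j^{(t)}|+\alpha_t|s_j^{(t)}-\widetilde s_j|\leq\varepsilon_t+\alpha_t\eta,
\]
where the last step uses \(\alpha_t>0\). Hence \(\widetilde{\boldsymbol s}\) satisfies the calibration condition of Equation~\eeqref{eq:app_calibration_conditions} with constant \(\alpha_t\) and error \(\varepsilon'=\varepsilon_t+\alpha_t\eta\). The interaction bound \(|I_t(S)|\leq\delta_t\) on \(\mathcal S_{K_t}\) is assumed directly. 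It involves only the target model and does not depend on which score vector is used.

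\textbf{Step 3: apply the theorem.} Apply Theorem~\ref{thm:calibrated_allocation_full} to the target model with budget count \(K_t\), score vector \(\widetilde{\boldsymbol s}\), and constants \((\alpha_t,\varepsilon',\delta_t)\). That theorem allows an arbitrary score vector in \(\mathbb R^{L}\) and a fixed tie rule. The set \(\widehat S_{s\to t}\) is exactly its top-\(K_t\) selection, and \(S_t^\star\) is its minimizer. It therefore gives \(0\leq F_t(\widehat S_{s\to t})-F_t(S_t^\star)\leq 2K_t\varepsilon'+2\delta_t\), which is the middle bound of Equation~\eeqref{eq:app_transfer_regret}. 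Substituting Step 1 gives the last bound.

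The only point needing care is confirming that Theorem~\ref{thm:calibrated_allocation_full} places no requirement on the scores beyond calibration. In particular, it does not need them to be nonnegative or normalized, since \(\widetilde{\boldsymbol s}\) differs from \(\boldsymbol s^{(t)}\). Checking Steps 1–2 of its proof shows that only the top-\(K\) ordering and \(\alpha>0\) are used, so the reduction is valid. The remaining work is bookkeeping.
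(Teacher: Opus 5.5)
Your proposal is correct and follows essentially the same route as the paper's proof. The paper likewise uses the $\ell_\infty$-contraction of the row-stochastic $W$ and the triangle inequality to bound $\eta$, then transfers the target calibration to $\widetilde{\boldsymbol s}$ with error $\varepsilon_t+\alpha_t\eta$, and finally applies Theorem~\ref{thm:calibrated_allocation_full} at the target scale; your remark that that theorem needs no nonnegativity or normalization of the scores is a correct check that the paper leaves implicit.
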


\begin{proof}
\textbf{Step 1: Control propagation of source measurement error.} For any vector \(\boldsymbol v\in\mathbb R^{L_s}\), nonnegativity and unit row sums of \(W\) imply, for each target layer \(j\),
\begin{equation}
\begin{aligned}
|(W\boldsymbol v)_j|&\leq\sum_{i=1}^{L_s}W_{ji}|v_i|\leq\|\boldsymbol v\|_\infty\sum_{i=1}^{L_s}W_{ji}
=\|\boldsymbol v\|_\infty.
\end{aligned}
\end{equation}
Taking the maximum over \(j\) yields \(\|W\boldsymbol v\|_\infty\leq\|\boldsymbol v\|_\infty\). With \(\boldsymbol v=\widehat{\boldsymbol s}^{(s)}-\boldsymbol s^{(s)}\), this gives
\begin{equation}
\|W\widehat{\boldsymbol s}^{(s)}-W\boldsymbol s^{(s)}\|_\infty\leq\varepsilon_s.
\label{eq:app_interpolation_error}
\end{equation}

\textbf{Step 2: Bound the complete transferred-profile discrepancy.} Add and subtract \(W\boldsymbol s^{(s)}\), and apply the triangle inequality:
\begin{align}
\eta
&=\|W\widehat{\boldsymbol s}^{(s)}-\boldsymbol s^{(t)}\|_\infty\leq\|W(\widehat{\boldsymbol s}^{(s)}-\boldsymbol s^{(s)})\|_\infty+\|W\boldsymbol s^{(s)}-\boldsymbol s^{(t)}\|_\infty\leq\varepsilon_s+\eta_{\mathrm{depth}}.
\label{eq:app_total_profile_error}
\end{align}

\textbf{Step 3: Calibrate the transferred scores to target benefits.} For every target layer \(j\),
\begin{align}
|G_j^{(t)}-\alpha_t\widetilde s_j|
&\leq|G_j^{(t)}-\alpha_t s_j^{(t)}|+\alpha_t|s_j^{(t)}-\widetilde s_j|\leq\varepsilon_t+\alpha_t\eta.
\label{eq:app_transferred_calibration}
\end{align}
Thus the transferred profile satisfies the calibration condition of Theorem~\ref{thm:calibrated_allocation_full} with error \(\varepsilon_t+\alpha_t\eta\).

\textbf{Step 4: Apply the allocation bound at the target scale.} The target budget defines \(\mathcal S_{K_t}\), the selected set maximizes the sum of transferred scores, and the interaction bound in Equation~\eeqref{eq:app_target_calibration} holds on all feasible sets. Theorem~\ref{thm:calibrated_allocation_full} therefore gives
\begin{equation}
0\leq F_t(\widehat S_{s\to t})-F_t(S_t^\star)
\leq2K_t(\varepsilon_t+\alpha_t\eta)+2\delta_t.
\end{equation}
Substituting Equation~\eeqref{eq:app_total_profile_error} proves the second inequality in Equation~\eeqref{eq:app_transfer_regret}. This also proves the transfer statement in the main text for any transferred profile with discrepancy at most \(\eta\).
\end{proof}

\section{Cost Analysis of Early Profiling and Cross-Scale Configuration}
\label{app:configuration_cost}

\subsection{Early Reference Training and Layer-Wise Probing}
\label{app:early_profile_cost}

We derive the computational components of configuration acquisition. For model \(a\), let \(L_a\) be its depth, \(N_a\) its full training-token budget, \(f_a\) its forward matrix-multiplication FLOPs per token, and \(\tau\) the reference-training fraction.

\textbf{Early reference training.} Counting the forward and corresponding backward matrix multiplications as three forward passes gives the full reference-training cost \(\mathcal C_{\mathrm{ref,full}}^{(a)}=3N_af_a\). With a fixed training batch, a fraction \(\tau\) of training steps processes the same fraction of training tokens, yielding
\begin{equation}
\mathcal C_{\mathrm{early}}^{(a)}=3\tau N_af_a.
\label{eq:early_reference_cost}
\end{equation}
We use \(\tau=0.1\), giving \(\mathcal C_{\mathrm{early}}^{(a)}=0.3N_af_a\).

\textbf{Layer-wise interventions.} Phase-off measurement performs one reference forward pass and \(L_a\) single-layer interventions for each evaluation input batch. Low/high-coverage benefit measurement performs one reference pass and two interventions per layer. Their respective call counts are
\begin{equation}
q_{\mathrm{phase}}=L_a+1,
\qquad
q_{\mathrm{gain}}=2L_a+1.
\label{eq:profile_query_counts}
\end{equation}
Suppose the selected profile is constructed from \(E\) probing events, with global sequence count \(b_e\) and sequence length \(T_e\) at event \(e\). The cumulative input-token count is
\begin{equation}
N_{\mathrm{prof}}^{(a)}=\sum_{e=1}^{E}b_eT_e.
\label{eq:profile_token_budget}
\end{equation}
On a fixed-width intervention carrier, the corresponding probing costs are
\begin{equation}
\begin{aligned}
\mathcal C_{\mathrm{phase}}^{(a)}&=(L_a+1)N_{\mathrm{prof}}^{(a)}f_a,\qquad \mathcal C_{\mathrm{gain}}^{(a)}=(2L_a+1)N_{\mathrm{prof}}^{(a)}f_a.
\end{aligned}
\label{eq:profiling_costs}
\end{equation}
The cumulative input count includes all probing events used to construct the final profile. Repeated evaluation of the same inputs across events is counted according to the executed forward passes.

\textbf{Configuration acquisition.} Combining early reference training with the chosen probing procedure gives
\begin{equation}
\mathcal C_{\mathrm{config}}^{(a)}=3\tau N_af_a+\mathcal C_{\mathrm{prof}}^{(a)}.
\label{eq:configuration_cost}
\end{equation}

\subsection{Same-Scale and Cross-Scale Configuration Acquisition}
\label{app:cross_scale_cost}

Consider a source model \(s\) and a target model \(t\) from the same family, both profiled at the same reference-training fraction \(\tau\).

\textbf{Reference-training cost ratio.} Equation~\eeqref{eq:early_reference_cost} gives
\begin{equation}
r_{\mathrm{early}}^{s\to t}
=
\frac{\mathcal C_{\mathrm{early}}^{(s)}}{\mathcal C_{\mathrm{early}}^{(t)}}
=
\frac{N_sf_s}{N_tf_t}.
\label{eq:early_cost_ratio}
\end{equation}
The reference-training cost therefore depends jointly on the training-token budget and per-token model arithmetic.

\textbf{Probing cost ratio.} For low/high-coverage benefit measurement,
\begin{equation}
r_{\mathrm{prof}}^{s\to t}
=
\frac{(2L_s+1)N_{\mathrm{prof}}^{(s)}f_s}{(2L_t+1)N_{\mathrm{prof}}^{(t)}f_t}.
\label{eq:profile_cost_ratio_general}
\end{equation}
With the same cumulative probing input, this simplifies to
\begin{equation}
r_{\mathrm{prof}}^{s\to t}
=
\frac{(2L_s+1)f_s}{(2L_t+1)f_t}.
\label{eq:profile_cost_ratio_same}
\end{equation}
This relation captures the contributions of model depth and single-pass computation to profiling cost.

\textbf{Combined cost ratio.} Combining reference training and probing yields
\begin{equation}
\frac{\mathcal C_{\mathrm{config}}^{(s)}}{\mathcal C_{\mathrm{config}}^{(t)}}
=
\frac{r_{\mathrm{early}}^{s\to t}\mathcal C_{\mathrm{early}}^{(t)}+r_{\mathrm{prof}}^{s\to t}\mathcal C_{\mathrm{prof}}^{(t)}}{\mathcal C_{\mathrm{early}}^{(t)}+\mathcal C_{\mathrm{prof}}^{(t)}}.
\label{eq:configuration_cost_ratio}
\end{equation}
The two cost weights are nonnegative and sum to one after normalization. The total configuration-cost ratio therefore lies between the component ratios and satisfies
\begin{equation}
\frac{\mathcal C_{\mathrm{config}}^{(s)}}{\mathcal C_{\mathrm{config}}^{(t)}}
\leq
\max\left\{r_{\mathrm{early}}^{s\to t},r_{\mathrm{prof}}^{s\to t}\right\}.
\label{eq:configuration_cost_upper}
\end{equation}
This derivation expresses cross-scale configuration efficiency through two measurable components: early reference training and layer-wise probing.

\subsection{End-to-End Cost and Profile Reuse}
\label{app:cost_amortization}

For target allocation \(\boldsymbol m\), let \(f_t(\boldsymbol m)\) denote forward FLOPs per token. Target pretraining from scratch costs
\begin{equation}
\mathcal C_{\mathrm{train}}^{(t)}(\boldsymbol m)=3N_tf_t(\boldsymbol m).
\label{eq:target_training_cost}
\end{equation}
When one source profile supports \(A\) target configurations and its acquisition cost is divided equally, the end-to-end cost attributed to each target is
\begin{equation}
\mathcal C_{\mathrm{E2E}}(\boldsymbol m)
=
3N_tf_t(\boldsymbol m)+\frac{\mathcal C_{\mathrm{config}}^{(s)}}{A}.
\label{eq:e2e_cost}
\end{equation}
Normalizing by full-rotation target training at the same specification, \(\mathcal C_{\mathrm{base}}^{(t)}=3N_tf_t^{\mathrm{full}}\), gives
\begin{equation}
r_{\mathrm{E2E}}
=
\frac{f_t(\boldsymbol m)}{f_t^{\mathrm{full}}}
+
\frac{3\tau N_sf_s+\mathcal C_{\mathrm{prof}}^{(s)}}{A\cdot3N_tf_t^{\mathrm{full}}}.
\label{eq:e2e_ratio_general}
\end{equation}
The first term is determined by the target architecture, and the second by the profile source and reuse count. Fixed-carrier allocation preserves the principal matrix-multiplication count in the first term, while physical dimension allocation changes it through the target structure. Cross-scale transfer and profile reuse jointly reduce the second term.

If the same source profile supports \(A\) targets with different architectures, the total computation is
\begin{equation}
\sum_{j=1}^{A}\mathcal C_{\mathrm{train}}^{(t_j)}(\boldsymbol m^{(t_j)})
+
\mathcal C_{\mathrm{config}}^{(s)}.
\label{eq:multi_target_total}
\end{equation}
Under equal attribution across targets, the assigned source costs sum to the cost of acquiring the profile once.

\section{Experimental Settings, Resource Accounting, and Additional Results}
\label{app:experiment_details}

\subsection{Models, Training, and Profiling Settings}
\label{app:exp_setup}

\textbf{Models for positional-demand analysis.} Section~\ref{sec:positional_demand} analyzes DeepSeek-MoE-16B~\citep{dai2024deepseekmoe}, LLaMA-2-7B~\citep{touvron2023llama}, LLaMA-3.2-3B~\citep{meta2024llama32}, and Qwen3-30B~\citep{yang2025qwen3}. Phase-off profiling uses 500 shared text samples per model, and coverage calibration uses 100 samples per configuration. Within each measurement, reference and intervention configurations share input token sequences, attention masks, and loss-evaluation positions. Analysis figures use one-based layer numbers; frozen architecture sets in Table~\ref{tab:active_layers} use zero-based indices.

\textbf{Models pretrained from scratch.} Table~\ref{tab:model_specs} summarizes the model structures. Shared settings include sequence length 4,096, padded vocabulary size 128,896, tied input/output embeddings, RMSNorm, SwiGLU, BF16, and zero attention and hidden dropout. The RoPE base is 10,000, and RoPE fusion is disabled. In Table~\ref{tab:model_specs}, \(L,d,d_{\mathrm{ff}},h,d_c\) denote depth, hidden width, FFN width, head count, and MLA KV latent width. Dense uses 64-dimensional queries, keys, and values per head. MLA uses content, full rotary, and value widths of 64 per head, with Q-LoRA disabled.

\begin{table}[t!]
\centering
\setlength{\tabcolsep}{6pt}
\renewcommand{\arraystretch}{1.2}
\caption{Pretraining architecture dimensions by scale; \(d_c\) is the MLA KV latent width.}
\label{tab:model_specs}
\begin{threeparttable}
\begin{adjustbox}{max width=\columnwidth}
\begin{tabular}{lccccc}
\toprule
Scale & $L$ & $d$ & $d_{\mathrm{ff}}$ & $h$ & $d_c$ \\
\midrule
130M & 20 & 512 & 1,408 & 8 & 128 \\
350M & 24 & 896 & 2,432 & 14 & 256 \\
1B, main & 28 & 1,536 & 4,096 & 24 & 512 \\
1B, KV384 & 28 & 1,536 & 4,096 & 24 & 384 \\
\bottomrule
\end{tabular}
\end{adjustbox}
\end{threeparttable}
\end{table}

\textbf{Optimization and execution.} Distributed target training is based on Megatron-LM~\citep{shoeybi2019megatron}. We use Adam with \(\beta_1=0.9\), \(\beta_2=0.95\), weight decay 0.1, and gradient-clipping threshold 1. The learning rate follows a warmup--stable--decay schedule with cosine decay in the final phase. Table~\ref{tab:training_hparams} reports the scale-specific settings; token counts equal steps times global batch size times sequence length.

\begin{table*}[t!]
\centering
\small
\setlength{\tabcolsep}{6pt}
\renewcommand{\arraystretch}{1.14}
\caption{Pretraining hyperparameters by scale, including token budgets, batch sizes, and learning-rate schedules.}
\label{tab:training_hparams}
\begin{threeparttable}
\begin{tabular*}{\textwidth}{@{\extracolsep{\fill}}lccc@{}}
\toprule
Setting & 130M & 350M & 1B \\
\midrule
Sequence length & 4,096 & 4,096 & 4,096 \\
GPUs per job & 2 & 4 & 4 or 8 \\
Micro-batch size per GPU & 4 & 2 & 1 \\
Global batch size & 64 & 128 & 256 \\
Tokens per step & 262,144 & 524,288 & 1,048,576 \\
Training steps & 9,919 & 13,352 & 19,074 \\
Training tokens & 2,600,206,336 & 7,000,293,376 & 20,000,538,624 \\
Peak learning rate & \(6\times10^{-4}\) & \(4\times10^{-4}\) & \(3\times10^{-4}\) \\
Minimum learning rate & \(6\times10^{-5}\) & \(4\times10^{-5}\) & \(3\times10^{-5}\) \\
Warmup steps & 198 & 267 & 382 \\
Stable steps & 8,729 & 11,750 & 16,785 \\
Cosine-decay steps & 992 & 1,335 & 1,907 \\
\bottomrule
\end{tabular*}
\end{threeparttable}
\end{table*}

Training primarily uses data parallelism, with tensor, pipeline, context, and expert parallel sizes all set to one. Random layer-set choices are distinguished from training-group replications. Within each comparison group, data order, training tokens, and optimization settings are matched. Dense execution uses Transformer Engine and FlashAttention~\citep{dao2022flashattention}.

\textbf{Early score measurement.} All profiles used for final architecture allocation are obtained at \(10\%\) of reference training. The source-side online monitor uses matched-marginal benefit measurement, with reference coverage 0.5 and low/high coverage 0.25/0.75. Let \(\mathcal L_{\ell,\mathrm{ref}=0.5}(x;\rho)\) denote loss when layer \(\ell\) uses coverage \(\rho\) and the remaining layers use reference coverage. A paired allocation score is
\begin{equation}
\begin{aligned}
a_\ell=\frac{1}{|\mathcal D|}\sum_{x\in\mathcal D}
\Bigl[&\mathcal L_{\ell,\mathrm{ref}=0.5}(x;0.25)-\mathcal L_{\ell,\mathrm{ref}=0.5}(x;0.75)\Bigr].
\end{aligned}
\label{eq:matched_marginal_score}
\end{equation}
After standard validation, the monitor replays a raw batch already consumed by that evaluation. It records current, exponential-moving-average (EMA), and cumulative scores, using EMA coefficient 0.8. The score records and frozen layer set associated with the \(10\%\) checkpoint determine the subsequent architecture.

Dense interventions apply identity rotation to unselected frequency pairs. The fixed-width MLA measurement carrier zeros unselected positional coordinates; the target MLA model constructs rotary branches at the allocated physical widths. The binary allocations use the \texttt{high\_to\_low} atom order and retain the corresponding frequency indices.

\subsection{Resource Evaluation Settings and Normalization}
\label{app:resource_protocol}

\textbf{Storage components.} Model storage comprises target-model weights and the full KV cache. We use BF16, with two bytes per weight or cache element, inference batch size one, and cache length 4,096. Tied input/output embeddings are counted once. Total parameter counts use the approximate values in the experiment configuration summary; storage is computed from these counts and the layer-wise cache dimensions.

\textbf{FLOPs convention.} A multiplication and an addition each count as one FLOP. Accounting covers layer-wise linear projections, query/key dot products, attention-weighted value products, SwiGLU projections, and vocabulary-output matrix multiplication. The forward and corresponding backward matrix multiplications together count as three forward passes. Causal attention uses \(T(T+1)/2\) valid query--key position pairs. Computation for demand-guided configurations includes \(10\%\) reference training and cumulative probing used to obtain the profile. The main tables use \(A=1\), charging the complete source cost to one target.

\textbf{Numerical profiling workload.} The main tables use a specified theoretical probing workload of two events, each processing eight global sequences of length 4,096, giving
\begin{equation}
N_{\mathrm{prof}}=2\times8\times4096=65{,}536.
\label{eq:accounting_profile_workload}
\end{equation}
The event count and global sequence count are the current accounting assumptions; event-wise global counts are unspecified in the available run summary. Other cumulative workloads are substituted linearly into the formulas in Appendix~\ref{app:early_profile_cost}.

\textbf{Resource ratios.} Storage and computation are normalized to full rotation at the same model specification:
\begin{equation}
r_S=\frac{S(\boldsymbol m)}{S(\boldsymbol m_{\mathrm{full}})},
\qquad
r_C=\frac{\mathcal C_{\mathrm{E2E}}(\boldsymbol m)}{3N_tf_t(\boldsymbol m_{\mathrm{full}})}.
\label{eq:resource_ratios}
\end{equation}
Each reference denominator is shared by configurations at the same scale, attention parameterization, and latent width. Resources for random allocations describe one training run with a given layer set.

\subsection{Parameter and Storage Counts for the Experimental Models}
\label{app:model_storage}

\textbf{Parameter counts.} Let \(d,d_{\mathrm{ff}},h,V\) denote hidden width, FFN width, head count, and vocabulary size. The experimental models satisfy \(d=64h\). Tied embeddings contain \(Vd\) parameters, and the three SwiGLU projections contain \(3dd_{\mathrm{ff}}\) matrix parameters per layer. Normalization and other auxiliary parameters are denoted by \(P_{\mathrm{aux}}\).

\textbf{Dense parameters.} Query, key, value, and output projections contain \(4d^2\) parameters per layer, giving
\begin{equation}
P_D=Vd+L\left(4d^2+3dd_{\mathrm{ff}}\right)+P_{\mathrm{aux}}.
\label{eq:dense_parameter_count}
\end{equation}
Fixed-carrier coverage configurations share this projection structure.

\textbf{MLA parameters.} Content-query and output projections contribute \(2d^2\) parameters per layer. The KV down-projection and content-key/value up-projections contribute \(3dd_c\). Rotary-query and shared positional-key projections at layer \(\ell\) contain \(d(h+1)d_{\mathrm{rope}}^{(\ell)}\) parameters. Hence,
\begin{equation}
\begin{aligned}
P_M(\boldsymbol m)
={}&Vd+L\left(2d^2+3dd_c+3dd_{\mathrm{ff}}\right)+2d(h+1)B+P_{\mathrm{aux}}.
\end{aligned}
\label{eq:mla_parameter_count}
\end{equation}
Holding the remaining specifications fixed, a rotary-budget change \(\Delta B\) changes the parameter count by \(2d(h+1)\Delta B\).

\textbf{Weights and full-layer cache.} Dense attention caches \(d\) key and \(d\) value elements per layer and token:
\begin{equation}
S_D=2P_D+2b_{\mathrm{inf}}T_{\mathrm{cache}}(2Ld).
\label{eq:dense_storage}
\end{equation}
MLA caches the KV latent and shared positional key at each layer:
\begin{equation}
S_M(\boldsymbol m)
=
2P_M(\boldsymbol m)
+2b_{\mathrm{inf}}T_{\mathrm{cache}}\left(Ld_c+2B\right).
\label{eq:mla_storage}
\end{equation}
Equal total rotary budgets therefore give equal aggregate rotary-projection and positional-key cache sizes.

\begin{table}[t!]
\centering
\setlength{\tabcolsep}{6pt}
\renewcommand{\arraystretch}{1.2}
\caption{Full- versus half-budget parameters (M) and BF16 storage (MiB; weights plus one 4K KV cache). \(r_S=S_{50}/S_{100}\).}
\label{tab:model_storage}
\begin{threeparttable}
\begin{adjustbox}{max width=\columnwidth}
\begin{tabular}{lrrrrr}
\toprule
Model & $P_{100}$ & $P_{50}$ & $S_{100}$ & $S_{50}$ & $r_S$ \\
\midrule
130M Dense & 130.2M & 130.2M & 408.34 & 408.34 & 1.000 \\
350M Dense & 349.5M & 349.5M & 1,002.62 & 1,002.62 & 1.000 \\
1B Dense & 990.8M & 990.8M & 2,561.80 & 2,561.80 & 1.000 \\ \midrule
130M MLA & 129.6M & 126.6M & 277.19 & 266.47 & 0.961 \\
350M MLA & 348.1M & 337.8M & 723.95 & 698.30 & 0.965 \\
1B MLA & 993.5M & 959.1M & 2,020.95 & 1,948.34 & 0.964 \\
\bottomrule
\end{tabular}
\end{adjustbox}
\end{threeparttable}
\end{table}

Table~\ref{tab:model_storage} combines the approximate parameter counts with Equations~\eeqref{eq:dense_storage} and~\eeqref{eq:mla_storage}. Subscripts 100 and 50 denote full and half rotary budgets; storage is in MiB, with \(1\,\mathrm{MiB}=2^{20}\) bytes. At 350M MLA, for example, half-budget allocation reduces total parameters from approximately 348.1M to 337.8M and the full-layer cache from 7,680 to 6,912 elements per token. Together, these changes reduce model storage from approximately 723.95 MiB to 698.30 MiB, corresponding to \(0.965\times\).

\subsection{Training and Configuration FLOPs for the Experimental Models}
\label{app:experiment_flops}

\textbf{Linear layers and attention.} For \(X\in\mathbb R^{T\times a}\) and \(W\in\mathbb R^{a\times b}\), forward multiplication requires \(2Tab\) FLOPs. Weight-gradient and input-gradient computations each contain a matrix multiplication with the same arithmetic count, giving \(6Tab\) for training. At each valid causal position pair, a single-head query/key dot product requires \(2d_{qk}\) FLOPs and the attention-weighted value product requires \(2d_v\). Dividing by sequence length gives the combined per-token forward cost \(h(T+1)(d_{qk}+d_v)\).

\textbf{Dense forward and training computation.} Linear projections and the FFN cost \(2T(4d^2+3dd_{\mathrm{ff}})\) per layer in the forward pass. Adding attention products across all layers and the vocabulary-output projection gives
\begin{equation}
\begin{aligned}
f_D={}&2L\left(4d^2+3dd_{\mathrm{ff}}\right)+Lh(T+1)(64+64)+2dV.
\end{aligned}
\label{eq:dense_forward_flops}
\end{equation}
Fixed-carrier allocation preserves these matrix dimensions.

\textbf{MLA forward and training computation.} Let \(\bar r=L^{-1}\sum_{\ell=1}^{L}d_{\mathrm{rope}}^{(\ell)}\) denote average rotary width, so that \(\bar r=2B/L\). The average query/key matching width is \(64+\bar r\), while value width is 64. Thus,
\begin{equation}
\begin{aligned}
f_M(\bar r)
={}&2L\left(2d^2+3dd_c+3dd_{\mathrm{ff}}+d(h+1)\bar r\right)+Lh(T+1)(128+\bar r)+2dV.
\end{aligned}
\label{eq:mla_forward_flops}
\end{equation}
Full, half, and three-quarter rotary budgets correspond to \(\bar r=64,32,48\), respectively.

\begin{table}[t!]
\centering
\setlength{\tabcolsep}{6pt}
\renewcommand{\arraystretch}{1.2}
\caption{Theoretical forward matrix FLOPs per token (GFLOPs) for Dense and full-/half-budget MLA.}
\label{tab:forward_flops}
\begin{threeparttable}
\begin{adjustbox}{max width=\columnwidth}
\begin{tabular}{lccc}
\toprule
Scale & Dense & \makecell[c]{MLA\\Full} & \makecell[c]{MLA\\Half} \\
\midrule
130M & 0.344347 & 0.384989 & 0.358114 \\
350M & 0.875112 & 0.960462 & 0.895767 \\
1B & 2.333823 & 2.515532 & 2.358617 \\
\bottomrule
\end{tabular}
\end{adjustbox}
\end{threeparttable}
\end{table}

Full target-training cost is \(3N_tf_t(\boldsymbol m)\). For half-budget MLA at the three scales, the target-training ratios are, respectively,
\begin{equation}
0.930193,\qquad 0.932642,\qquad 0.937622.
\label{eq:target_flops_ratios}
\end{equation}

\textbf{Reference-training and profiling substitutions.} The 130M source model has 20 layers, so low/high-coverage measurement performs 41 forward passes per input batch. Under the cumulative workload in Appendix~\ref{app:resource_protocol}, \((2L_s+1)N_{\mathrm{prof}}=41\times65{,}536=2{,}686{,}976\). The cumulative probing costs are
\begin{equation}
\begin{aligned}
\mathcal C_{\mathrm{prof}}^{D}=0.925251\times10^{15}\ \mathrm{FLOPs},\qquad
\mathcal C_{\mathrm{prof}}^{M}=1.034457\times10^{15}\ \mathrm{FLOPs}.
\end{aligned}
\label{eq:profile_flops_values}
\end{equation}
Early reference-training costs use the source training-token counts in Table~\ref{tab:training_hparams} and the \(10\%\) fraction. Changing the cumulative probing input gives
\begin{equation}
\mathcal C_{\mathrm{prof,new}}
=
\mathcal C_{\mathrm{prof,table}}\frac{N_{\mathrm{prof,new}}}{65{,}536}.
\label{eq:profile_workload_scaling}
\end{equation}

\textbf{End-to-end cost aggregation.} Normalize target training, early reference training, and profiling separately by full-rotation target training:
\begin{equation}
r_C=r_{\mathrm{target}}+\frac{r_{\mathrm{early}}+r_{\mathrm{prof}}}{A}.
\label{eq:e2e_ratio_components}
\end{equation}
Table~\ref{tab:e2e_components} reports the resulting decomposition with \(A=1\) and the probing workload in Appendix~\ref{app:resource_protocol}. The 130M references are same-scale; larger targets use a 130M source. Prespecified uniform configurations and given random layer sets proceed directly to target training, so their compute ratios use the target-training term. Reverse allocations generated from a demand profile include configuration acquisition in the same way.

\begin{table}[t!]
\centering
\setlength{\tabcolsep}{6pt}
\renewcommand{\arraystretch}{1.2}
\caption{Theoretical end-to-end FLOPs ratios to Full for half-budget allocation: target training, reference training, and probing.}
\label{tab:e2e_components}
\begin{threeparttable}
\begin{adjustbox}{max width=\columnwidth}
\begin{tabular}{lcccc}
\toprule
Target & Train & Reference & Probe & Total \\
\midrule
130M Dense & 1.000000 & 0.100000 & 0.000344 & \textbf{1.1003} \\
350M Dense & 1.000000 & 0.014616 & 0.000050 & \textbf{1.0147} \\
1B Dense & 1.000000 & 0.001918 & 0.000007 & \textbf{1.0019} \\
130M MLA & 0.930193 & 0.100000 & 0.000344 & \textbf{1.0305} \\
350M MLA & 0.932642 & 0.014889 & 0.000051 & \textbf{0.9476} \\
1B MLA & 0.937622 & 0.001990 & 0.000007 & \textbf{0.9396} \\
\bottomrule
\end{tabular}
\end{adjustbox}
\end{threeparttable}
\end{table}

\textbf{Compute changes in the experimental cases.} Target training for 350M MLA costs \(0.932642\times\) the full model, increasing to \(0.947582\times\) after adding early 130M reference training and profiling. The corresponding end-to-end ratio for 1B MLA is \(0.939618\times\). Dense target-matrix arithmetic remains unchanged, while the relative configuration-acquisition cost decreases with target scale.

\textbf{Configuration reuse.} When one source profile serves \(A\) targets, the 350M MLA cost becomes
\begin{equation}
r_C(A)=0.932642+\frac{0.014889+0.000051}{A}.
\label{eq:reuse_example}
\end{equation}
The same reuse rule applies to Dense and MLA using their respective source profiles.

\textbf{Comparison with target-direct profiling.} Under the same cumulative probing input and the relations in Appendix~\ref{app:configuration_cost}, early reference training for the 130M Dense source costs approximately \(0.1462\times\) and \(0.0192\times\) that of direct reference acquisition at 350M and 1B, respectively; probing costs are \(0.3292\times\) and \(0.1061\times\). The corresponding MLA early-reference ratios are \(0.1489\times\) and \(0.0199\times\), with probing ratios \(0.3354\times\) and \(0.1101\times\). These values quantify the reductions in both components of configuration acquisition obtained from a smaller model.

\subsection{Frozen Layer Sets and Layer-Selection Ablations}
\label{app:layer_ablation}

\textbf{Configuration names.} Profile50 and Profile75 use same-scale allocation scores at 50\% and 75\% rotary budgets. Transfer50 uses a within-family source profile, and Target-direct50 uses the target's early profile. These are instances of Demand-guided in the main tables. Reverse50 takes the strict complement of Profile50, while Last10 selects the final ten layers. Where multiple Random50 sets are evaluated, they represent distinct layer allocations rather than independent training replications.

\textbf{Layer-wise configurations.} Table~\ref{tab:active_layers} lists the frozen high-coverage sets. Selected layers use rotary width 64; other layers use identity rotation in Dense or zero rotary width in MLA. The same 350M MLA transferred set is used in the additional paired training runs.

\begin{table}[t!]
\centering
\setlength{\tabcolsep}{2pt}
\renewcommand{\arraystretch}{1.2}
\caption{High-coverage layer sets for each allocation; indices are zero-based and ranges are inclusive.}
\label{tab:active_layers}
\begin{threeparttable}
\begin{tabularx}{\columnwidth}{@{}>{\raggedright\arraybackslash}X>{\centering\arraybackslash}p{0.42\columnwidth}@{}}
\toprule
Model / allocation & Layers \\
\midrule
130M Dense, Profile50 & \(7,\ 10\text{--}18\) \\
130M MLA, Profile50 & \(0\text{--}2,\ 12\text{--}18\) \\
130M MLA, Profile75 & \(0\text{--}5,\ 8,\ 12\text{--}19\) \\
130M MLA, Reverse50 & \(3\text{--}11,\ 19\) \\
130M MLA, Last10 & \(10\text{--}19\) \\
350M Dense, Transfer50 & \(8,\ 12\text{--}22\) \\
350M MLA, Transfer50 & \(0\text{--}3,\ 15\text{--}22\) \\
1B Dense, Transfer50 & \(10,\ 14\text{--}26\) \\
1B MLA, Transfer50 & \(0\text{--}3,\ 17\text{--}26\) \\
1B Dense, Target-direct50 & \(6,\ 12\text{--}24\) \\
\bottomrule
\end{tabularx}
\end{threeparttable}
\end{table}

\textbf{Layer placement in 130M MLA.} Table~\ref{tab:mla130_layer_ablation} compares demand-guided, random, reversed, and contiguous late-layer allocations. The three random sets have mean loss \(3.412386\). Demand-guided allocation improves over this mean by \(0.030679\), over reverse ranking by \(0.037628\), and over the final-ten-layer configuration by \(0.087689\).

\begin{table}[t!]
\centering
\setlength{\tabcolsep}{6pt}
\renewcommand{\arraystretch}{1.2}
\caption{130M MLA allocation ablation at 50\% budget: validation loss for profile-guided, random, reversed, uniform, and last-ten-layer strategies. Random50 is averaged over three random layer sets.}
\label{tab:mla130_layer_ablation}
\begin{threeparttable}
\begin{adjustbox}{max width=2\columnwidth}
\begin{tabular}{lr}
\toprule
Configuration & \makecell[r]{Validation loss} \\
\midrule
\textbf{Profile50} & \textbf{3.381707} \\
Random50 & 3.412386 \\
Reverse50 & 3.419335 \\
Uniform50 & 3.425247 \\
Last10 & 3.469396 \\
\bottomrule
\end{tabular}
\end{adjustbox}
\end{threeparttable}
\end{table}

\textbf{Random controls in 350M MLA.} The two random sets in Table~\ref{tab:mla350_random} have mean loss \(3.062779\). Cross-scale allocation improves over the random mean by \(0.011339\).

\begin{table}[t!]
\centering
\setlength{\tabcolsep}{6pt}
\renewcommand{\arraystretch}{1.2}
\caption{350M MLA validation loss at 50\% budget: transferred profiles versus random layer sets and uniform coverage. Random50 is averaged over two random layer sets.}
\label{tab:mla350_random}
\begin{threeparttable}
\begin{adjustbox}{max width=\columnwidth}
\begin{tabular}{lr}
\toprule
Configuration & \makecell[r]{Validation loss} \\
\midrule
\textbf{Transfer50} & \textbf{3.051440} \\
Random50 & 3.062779 \\
Uniform50 & 3.067263 \\
\bottomrule
\end{tabular}
\end{adjustbox}
\end{threeparttable}
\end{table}

\subsection{Rotary Budget and Quality--Resource Trade-offs}
\label{app:budget_ablation}

We vary the number of high-coverage layers in 130M MLA along the same demand ranking, comparing half, three-quarter, and full rotary budgets. Profile50 reaches loss \(3.381707\), outperforming Profile75 at \(3.390307\) by \(0.008600\), while approaching the full-rotation loss of \(3.377954\).

\begin{table}[t!]
\centering
\setlength{\tabcolsep}{6pt}
\renewcommand{\arraystretch}{1.2}
\caption{130M MLA rotary-budget ablation: validation loss at 50\%, 75\%, and 100\% budgets, including Uniform50.}
\label{tab:mla130_budgets}
\begin{threeparttable}
\begin{adjustbox}{max width=\columnwidth}
\begin{tabular}{lcr}
\toprule
Configuration & Budget & \makecell[r]{Validation loss} \\
\midrule
Full & \(100\%\) & \textbf{3.377954} \\
Profile75 & \(75\%\) & 3.390307 \\
Profile50 & \(50\%\) & 3.381707 \\
Uniform50 & \(50\%\) & 3.425247 \\
\bottomrule
\end{tabular}
\end{adjustbox}
\end{threeparttable}
\end{table}

\begin{table}[t!]
\centering
\setlength{\tabcolsep}{6pt}
\renewcommand{\arraystretch}{1.2}
\caption{1B MLA validation loss with KV latent width 384: Full versus Uniform50.}
\label{tab:mla1b_kv384}
\begin{threeparttable}
\begin{adjustbox}{max width=\columnwidth}
\begin{tabular}{lr}
\toprule
Configuration & Validation loss \\
\midrule
\textbf{Full} & \textbf{2.731081} \\
Uniform50 & 2.745739 \\
\bottomrule
\end{tabular}
\end{adjustbox}
\end{threeparttable}
\end{table}

\begin{figure}[t!]
\centering
\includegraphics[width=0.7\linewidth]{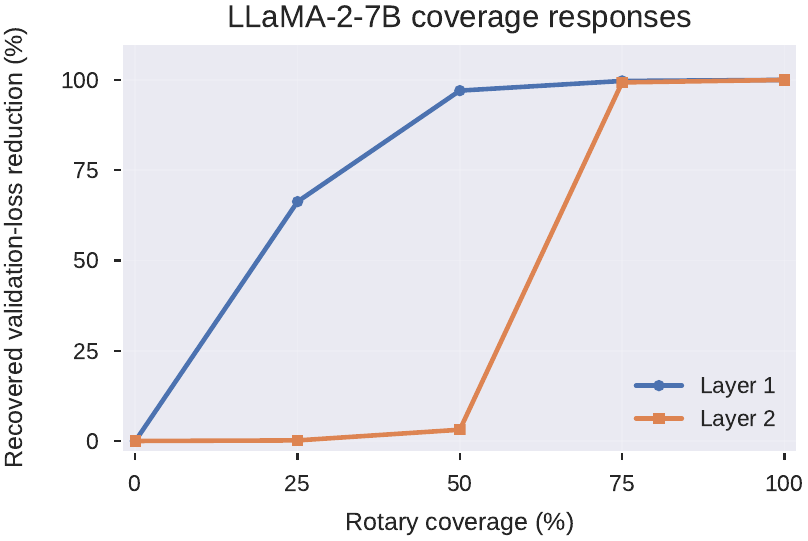}
\caption{Recovered validation-loss reduction across rotary coverage for LLaMA-2-7B layers 1 and 2 (one-based). Both curves use the same calibration texts.}
\label{fig:coverage_responses}
\end{figure}

Equations~\eeqref{eq:mla_parameter_count} and~\eeqref{eq:mla_forward_flops} make parameters and forward matrix-multiplication computation linear in average rotary width. The three-quarter budget has \(\bar r=48\), giving a target-training FLOPs ratio of \(0.965097\). Interpolating the approximate parameter counts in Table~\ref{tab:model_storage} gives combined weight and single-sequence 4K cache storage of approximately 271.83 MiB, or \(0.981\times\) the full model. Including same-scale early reference training and the specified probing workload yields an end-to-end compute ratio of \(1.0654\times\).

Demand ranking supports allocations at multiple budget levels, while the physical budget determines storage and target-training computation. In the evaluated 130M runs, the quality--budget response is non-monotone, with Profile50 offering a favorable trade-off.

\subsection{Cross-Scale and Paired Training}
\label{app:paired_transfer}

\textbf{Paired training runs.} Table~\ref{tab:mla350_repeats} reports the mean of two paired comparisons for 350M MLA. Transfer50 reduces validation loss by \(0.011348\) on average relative to Uniform50. The reported \(\Delta\) is computed within each matched training group before averaging.

\begin{table}[t!]
\centering
\setlength{\tabcolsep}{6pt}
\renewcommand{\arraystretch}{1.2}
\caption{350M MLA validation loss averaged over two paired training runs. \(\Delta\) is the mean within-pair difference (Uniform50 minus Transfer50).}
\label{tab:mla350_repeats}
\begin{threeparttable}
\begin{adjustbox}{max width=\columnwidth}
\begin{tabular}{lccc}
\toprule
Statistic & Uniform50 & Transfer50 & \makecell[c]{$\Delta$ Validation loss} \\
\midrule
Mean & 3.060510 & \textbf{3.049162} & \textbf{0.011348} \\
\bottomrule
\end{tabular}
\end{adjustbox}
\end{threeparttable}
\end{table}

\textbf{Target-direct profiling.} At 1B Dense, we also construct an allocation using the target model's own profile at \(10\%\) of reference training and compare it with uniform coverage from the same training group. Target-direct allocation improves loss by \(0.013850\). Its high-coverage layers are \(6,12\text{--}24\), sharing eleven layers with the transferred set and giving a set Jaccard score of \(11/17\). Layers exclusive to source transfer are \(10,25,26\), while those exclusive to target-direct allocation are \(6,12,13\).

\begin{table}[t!]
\centering
\setlength{\tabcolsep}{6pt}
\renewcommand{\arraystretch}{1.2}
\caption{1B Dense validation loss with target-direct profiling versus Uniform50 at 50\% budget.}
\label{tab:dense1b_direct}
\begin{threeparttable}
\begin{adjustbox}{max width=\columnwidth}
\begin{tabular}{lr}
\toprule
Configuration & Validation loss \\
\midrule
Uniform50 & 2.735777 \\
\textbf{Target-direct50} & \textbf{2.721927} \\
\bottomrule
\end{tabular}
\end{adjustbox}
\end{threeparttable}
\end{table}

Source transfer and target-direct measurement both supply architecture-selection signals: the former reuses small-model structure, and the latter uses the target's early layer-wise distribution. Their acquisition costs follow the separate accounting in Appendices~\ref{app:configuration_cost} and~\ref{app:experiment_flops}.

\subsection{Additional Experiments with a Narrower KV Latent}
\label{app:kv384}

This group uses a 1B MLA model with KV latent width 384. The remaining structure has 28 layers, hidden width 1,536, FFN width 4,096, and 24 heads, with the 1B training settings in Table~\ref{tab:training_hparams}.

Uniformly reducing rotary width increases loss by \(0.014658\). The full-layer logical cache decreases from \(28(384+64)=12{,}544\) to \(28(384+32)=11{,}648\) elements per token. Substituting this latent width into Equations~\eeqref{eq:mla_parameter_count}, \eeqref{eq:mla_storage}, and~\eeqref{eq:mla_forward_flops} gives full-model storage and training computation. This experiment complements the rotary-budget--quality analysis with a narrower latent specification.

\subsection{Representative Coverage Responses}
\label{app:coverage_responses}

Figure~\ref{fig:coverage_responses} compares LLaMA-2-7B layers 1 and 2, as discussed in Section~\ref{sec:coverage_calibration}. The ordinate is the recovered utility \(\widehat G_\ell(0,\rho)\), divided by each layer's phase-off loss on the same 100 calibration samples. At half coverage, layer 1 recovers approximately 97.0\% of its full-rotation utility, whereas layer 2 recovers 3.1\%; the latter rises to 99.3\% at three-quarter coverage.

\end{document}